\setlist{noitemsep, topsep=0cm}
\tikzset{cross/.style={cross out, draw=black, minimum size=2*(#1-\pgflinewidth), inner sep=0pt, outer sep=0pt},cross/.default={1pt}}
\newtheorem{theorem}{Theorem}[section]
\newtheorem{proposition}[theorem]{Proposition}
\newtheorem{lemma}[theorem]{Lemma}
\newtheorem{corollary}[theorem]{Corollary}
\theoremstyle{remark}
\newtheorem{remark}[theorem]{Remark}
\theoremstyle{definition}
\newtheorem{example}[theorem]{Example}
\newcommand{\norm}[1]{\left\lVert#1\right\rVert}
\newcommand{\pmat}[3]{\begin{pmatrix} #1 & #2 &\cdots & #3 \end{pmatrix}}
\newcommand{\pmatr}[1]{\begin{pmatrix}#1\end{pmatrix}}
\newcommand{\dict}[1]{d_{#1}}
\newcommand{\dictx}[1]{\delta_{#1}}
\newcommand{\inprod}[2]{\left\langle#1, #2\right\rangle}
\newcommand{\posdef}[1]{\mathbb{S}^{#1 \times #1}_{++}}
\newcommand{\possemdef}[1]{\mathbb{S}^{#1 \times #1}_{+}}
\DeclareSymbolFont{symbolsC}{U}{pxsyc}{m}{n}
\DeclareMathOperator{\trace}{tr}
\DeclareMathOperator{\image}{image}
\DeclareMathOperator{\Span}{span}
\DeclareMathOperator{\rank}{rank}
\DeclareMathOperator{\Ortho}{Ortho}
\DeclareMathOperator{\diag}{diag}
\DeclareMathOperator{\EE}{\mathsf{E}}
\DeclareMathOperator*{\minimize}{minimize}
\DeclareMathOperator*{\sbjto}{subject\,to}
\DeclarePairedDelimiterX\set[1]\lbrace\rbrace{\def\suchthat{\;\delimsize\vert\;}#1}
\newcommand{\Let}{\coloneqq}
\newcommand{\R}{\mathbb{R}}
\newcommand{\opt}{^\ast}
\newcommand{\transp}{^\top}
\newcommand{\xz}{x_{\mathrm{i}}}
\newcommand{\xf}{x_{\mathrm{f}}}
\newcommand{\reachindex}{K}
\newcommand{\reachmat}{\mathfrak{R}}
\renewcommand{\geq}{\geqslant}
\renewcommand{\ge}{\geqslant}
\renewcommand{\leq}{\leqslant}
\renewcommand{\mapsto}{\longmapsto}
\title{Optimal Dictionary for Least Squares Representation}
\author{Mohammed Rayyan Sheriff}
\author{Debasish Chatterjee}
\thanks{The authors are with Systems and Control Engineering, IIT Bombay, Mumbai 400076, India. Emails:\ (MRS) \texttt{mohammedrayyan@sc.iitb.ac.in}, (DC) \texttt{dchatter@iitb.ac.in}}
\begin{document}

\begin{abstract}%   <- trailing '%' for backward compatibility of .sty file
	Dictionaries are collections of vectors used for representations of random vectors in Euclidean spaces. Recent research on optimal dictionaries is focused on constructing dictionaries that offer sparse representations, i.e., \(\ell_0\)-optimal representations. Here we consider the problem of finding optimal dictionaries with which representations of samples of a random vector are optimal in an \(\ell_2\)-sense: optimality of representation is defined as attaining the minimal average $\ell_2$-norm of the coefficients used to represent the random vector. With the help of recent results on rank-\(1\) decompositions of symmetric positive semidefinite matrices, we provide an explicit description of $\ell_2$-optimal dictionaries as well as their algorithmic constructions in polynomial time.
\end{abstract}

\keywords{%
	\(\ell_2\)-optimal dictionary, rank-\(1\) decomposition, finite tight frames%
}

\maketitle

%%%%%%%%%%%%%%%%%%%%%%%%%%%%%%%%%%%%%%%%%%%%%%%%%%%%%%%%%%%%%%%%%%%%%%%%%%%%%%%%
\section{Introduction}
%%%%%%%%%%%%%%%%%%%%%%%%%%%%%%%%%%%%%%%%%%%%%%%%%%%%%%%%%%%%%%%%%%%%%%%%%%%%%%%%
A \emph{dictionary} is a collection of vectors in a finite-dimensional vector space over $\mathbb{R}$, with which other vectors of the vector space are represented. A dictionary is a generalization of a basis: While the number of vectors in a basis is exactly equal to the dimension of the vector space, a dictionary may contain more elements. In this article we consider a problem of finding an optimal dictionary, where optimality is interpreted as the minimum expected average size of the coefficients required to represent a certain collection of vectors drawn from a given probability distribution.
		
We begin with a toy example to motivate the problems treated in this article. Let \( V \) be a random vector that attains values `close' to \(\pmatr{0 & 2}^\top\) with high probability; the situation is demonstrated in figure \ref{fig:comparison-of-dictionary}.

			\begin{figure}
			\centering
			\subfloat[]{
			\label{fig:part1}
			\begin{tikzpicture}
			\draw[->,dashed] (-1.75,0)--(2.5,0) node[right]{$x$};
\draw[->,dashed] (0,-1.75)--(0,3.5) node[above]{$y$};
			\coordinate (A) at (0,0);
			\coordinate (B) at (1.5,0.15);
			\coordinate (C) at (1.5,-0.15);
			 \coordinate (D) at (0,3);
			 \node[text width=2cm] at (1.5,3) {V};
			 %\coordinate(V) at (2,3.3) node [right] {V};
			\draw [fill=blue] (A) circle (1pt) node [left] {};
			\draw [fill=blue] (B) circle (1pt) node [above right] {\( d_1 \)};
			\draw [fill=blue] (C) circle (1pt) node [below right] {\( d_2 \)};
			\draw [dotted](0,0) circle [radius=1.523];
			\draw [white,even odd rule,inner color=black,outer color = white](0,3) circle [radius=0.25];
			% \draw[blue,rounded corners=0.25mm] (D) \irregularcircle{0.25cm}{1mm};
%			 \draw  plot[smooth, tension=.8] coordinates {(-2.5,-0.5) (-3.5,0) (-2.5,0.5) (-3,1) (-2,1.5) (-2,3) (-1,2.5) (1,4.5) (2.5,3) (3,3.5) (3.5,3) (3,2) (4.5,2) (4.5,0) (3,1) (2.5,-0.5) (3.5,-1.5) (1.5,-1) (0.5,-2) (-2,-2.5) (-1.5,-1) (-2.5,-1.5) (-2.5,-0.5)};
			
			%\draw [ red, thick] (A) -- (B);
			%\draw [ red, thick] (A) -- (C);
			\draw [->, red, thick] (A) -- (B); % If you wanna arrows
			\draw [->, red, thick] (A) -- (C);

			\end{tikzpicture}
				}	
				\subfloat[]{
				\label{fig:part2}
			\begin{tikzpicture}
			\draw[->,dashed] (-1.75,0)--(2.5,0) node[right]{$x$};
\draw[->,dashed] (0,-1.75)--(0,3.5) node[above]{$y$};
			\coordinate (A) at (0,0);
			\coordinate (B) at (0.15, 1.5);
			\coordinate (C) at (-0.15, 1.5);
			 \coordinate (D) at (0,3);
			  \node[text width=2cm] at (1.5,3) {V};
			\draw [fill=blue] (A) circle (1pt) node [left] {};
			\draw [fill=blue] (B) circle (1pt) node [above right] {\( d_1\opt \)};
			\draw [fill=blue] (C) circle (1pt) node [above left] {\( d_2\opt \)};
			\draw [dotted](0,0) circle [radius=1.523];
			\draw [white,even odd rule,inner color=black,outer color = white](0,3) circle [radius=0.25];
			% \draw[blue,rounded corners=0.25mm] (D) \irregularcircle{0.25cm}{1mm};
%			 \draw  plot[smooth, tension=.8] coordinates {(-2.5,-0.5) (-3.5,0) (-2.5,0.5) (-3,1) (-2,1.5) (-2,3) (-1,2.5) (1,4.5) (2.5,3) (3,3.5) (3.5,3) (3,2) (4.5,2) (4.5,0) (3,1) (2.5,-0.5) (3.5,-1.5) (1.5,-1) (0.5,-2) (-2,-2.5) (-1.5,-1) (-2.5,-1.5) (-2.5,-0.5)};
			
			%\draw [ red, thick] (A) -- (B);
			%\draw [ red, thick] (A) -- (C);
			\draw [->, red, thick] (A) -- (B); % If you wanna arrows
			\draw [->, red, thick] (A) -- (C);

			\end{tikzpicture}
				}
				
	\caption{Comparison of two dictionaries. }	
	\label{fig:comparison-of-dictionary}	
		\end{figure}
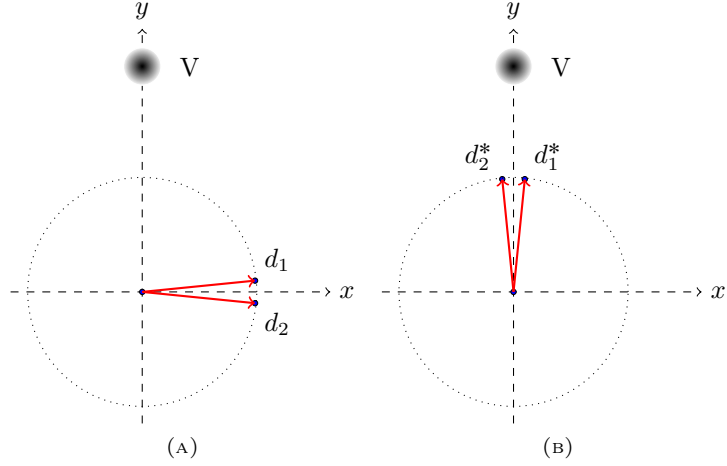
Suppose that our dictionary consists of the vectors $ d_1 = \pmatr{1 & -\epsilon}^\top $ and $ d_2 = \pmatr{1  &\epsilon}^\top $ in $\mathbb{R}^2$, with a small positive value of $\epsilon$. Since we must represent \( V \) using $d_1$ and $d_2$, the corrresponding coefficients \( \alpha_1 \) and \( \alpha_2 \) must be such that $\alpha_1 \pmatr{1 & \epsilon}^\top + \alpha_2 \pmatr{1 & -\epsilon}^\top = V \approx \pmatr{0 & 2}^\top $. A quick calculation shows that the magnitudes of the coefficients \( \alpha_1 \) and \( \alpha_2 \) should then be approximately equal to \( 1/(\epsilon) \) with high probability. To wit, the magnitudes of these coefficients are large for small values of $\epsilon$. It is therefore more appropriate in this situation to consider a dictionary consisting of vectors $ d_1\opt = \pmatr{\epsilon & 1}^\top $ and $ d_2\opt = \pmatr{-\epsilon & 1}^\top $ to represent the samples of \( V \), in which case, the magnitudes of the coefficients of the representations are closer to $1$ with high probability. The latter values are comparatively far smaller compared to the values close to $1/(\epsilon)$ obtained with the preceding dictionary. This simple example shows that given some statistical information about the random vectors to be represented, the question of designing a dictionary that minimizes the average cost of representation can be better addressed. 

Let us now turn to a situation in which considering the average cost of representations is natural. Our motivation comes from a control theoretic ideas perspective. Consider a linear time-invariant control system modeled by the recursion
\begin{equation}
\label{linear system}
x(t+1) = A x(t) + B u(t), \quad t = 0, 1, \ldots,
\end{equation}
where the `system matrix' \(A\in\R^{n\times n}\) and the `control matrix' \(B\in\R^{n\times m}\) are given, with the initial boundary condition \(x(0) = \xz\in\R^n\) fixed. For an arbitrarily selected \(\xf\in\R^n\), consider the standard \emph{reachability problem} for \eqref{linear system}, that is:

\begin{equation}
\label{eq:lti-reachability-problem}
\begin{aligned}
& \text{If possible, find a sequence \( (u(t))_{t}\subset\R^m \) of control vectors }\\
& \text{that steer the system states to \( \xf \)}.
\end{aligned}
\end{equation}
A necessary and sufficient condition for such a sequence to exist for every pair \((\xz, \xf)\) is that the rank of the matrix \( \reachmat_{\reachindex} (A, B) \Let \pmatr{B & AB & \cdots & A^{n-1}B} \) is equal to \( n \), which we impose for the moment. Letting \(\reachindex \Let \min \; \set[\big]{k\ge 0 \suchthat \rank \left( \reachmat_{\reachindex} (A, B) \right) = n}\) denote the `reachability index' of \eqref{linear system}, we see at once that the control vectors $(u(t))_{t=0}^{\reachindex-1}$ needed to execute the transfer of the states of \eqref{linear system} from $\xz$ to $\xf$ must be a solution to the linear equation
\[
	\xf - A^\reachindex \xz = \sum_{t=0}^{\reachindex-1} A^t B u(t) = \reachmat_{\reachindex} (A, B) \pmatr{u(\reachindex-1)\\\vdots \\ u(1)\\ u(0)}.
\]
It is now natural to consider the `control cost' of transferring \(\xz\) to \(\xf\), for which, a natural candidate is the associated \(\ell_2\) performance index $ \sum_{i = 0}^{\reachindex - 1} \norm{u(t)}^2 $. Since in practice, the \( \ell_2 \) performance index is analogous to the amount of energy spent to control the system, its practical importance can hardly be overstated in the context of control. Let us list three examples: 
\begin{itemize}[label=\(\circ\), leftmargin=*]
\item In attitude control/orientation problems of space vehicles, one must execute most of the rapid manoeuvers using the energy from the limited amount of fuel on board, or with the energy available from on-board batteries; minimizing the energy expenditure, therefore, is crucial.
\item In controlled automated mobile robots (e.g., automated cars) designed to reach a given location within a certain time, reduction of energy consumption leads directly to reduction in fuel consumed.
\item In control of electronic systems such as power electronic drives, the associated \( \ell_2 \) performance index involves information of the amount of power drawn from the electricity grid to control the system, leading directly to minimization of power consumption and thereby heating.
\end{itemize}
Minimization of control effort has been an integral part of control theory, and is generally studied under the class of Linear Quadratic problems; see, e.g., \cite{bertsekas1995dynamic}, \cite{anderson2007optimal}, \cite{clarke2013functional}, \cite{liberzon2012calculus}, or any standard book on optimal control. It is evident that the task of designing control systems that require minimum control energy for their typical manoeuvres is of great importance.
It is a standard practice to study the reachability problem \eqref{eq:lti-reachability-problem}, for $\xz = 0$ and $\xf$ on the unit sphere; due to linearity of \eqref{linear system}, this special case provides sufficient insight into the general case. Let us consider the following optimal control problem: 
\begin{equation}
\label{eq:optimal-control-problem}
\begin{aligned}
& \minimize_{(u(t))_t}	&& \EE \biggl[ \sum_{t = 0}^{\reachindex - 1} \norm{u(t)}^2 \biggr]\\
		& \sbjto							&&
		\begin{cases}
			x(t + 1) = A x(t) + B u(t) \quad \text{for all $ t = 0,\ldots,K - 1 $},\\
			x(0) = 0,\\
			x(K) = \hat x \text{ distributed according to \(\mu\)},
		\end{cases}
\end{aligned}
\end{equation}
where \(\mu\) is a probability distribution on \(\R^d\). It is known that if \( \hat x \) is uniformly distributed over the unit sphere, then the optimal control problem \eqref{eq:optimal-control-problem} admits an unique optimal solution and the optimum value is proportional to $\trace\bigl(W^{-1}_{A, B}\bigr)$, where $W_{A, B} \coloneqq \reachmat_{\reachindex}(A, B)\reachmat_{\reachindex}(A, B)\transp$ is the \emph{controllability grammian} of the system; for details see, e.g., \cite{ref:MulWeb-72} and \cite{pasqualetti2014controllability}. It can be readily shown that if \( \Sigma \coloneqq \EE [ \hat x \hat x^{\top} ] \) is well defined, then the optimum value of \eqref{eq:optimal-control-problem} is equal to \( \trace\bigl( \Sigma W^{-1}_{A, B}\bigr) \). Evidently, for a given distribution of \( \hat x \), different linear systems \eqref{linear system} --- described completely by the pair \((A, B)\) --- incur different optimum values \( \trace\bigl( \Sigma W^{-1}_{A, B}\bigr) \) of \eqref{eq:optimal-control-problem}.

Against the above backdrop, consider the question of \emph{designing} the linear control system \eqref{linear system} such that the value of \eqref{eq:optimal-control-problem} is as low as possible. Since most control problems involve designing control sequences to execute a class of desired manoeuvres, for a given distribution of \( \hat x \) it is then natural to design the linear systems in order to minimize the optimum value \( \trace\bigl( \Sigma W^{-1}_{A, B}\bigr) \) of the optimal control problem \eqref{eq:optimal-control-problem}. In this case, the system design problem is similar to the one of finding an \(\ell_2\)-optimal dictionary as described above: here the matrices $A$ and $B$ are to be designed, within a feasible region, such that the column vectors constituting the matrix $\reachmat_\reachindex(A, B)$ lead to minimal expected average cost of reachability, i.e., minimal value of \eqref{eq:optimal-control-problem}. Such problems routinely arise in networked control, where the pair $(A, B)$ is a function of the constituent systems and the connectivity of the network. From an operational standpoint, it is good for a networked system to have its components connected in a way such that the resulting system incurs small expected average state transfer costs. Indeed, control systems are typically designed \cite{ref:MulWeb-72} by optimizing a \emph{figure of merit} / \emph{measure of quality} / \emph{measure of controllability}; in particular, networked control systems are designed in \cite{pasqualetti2014controllability} using a measure of quality defined there. Based on this work on \( \ell_2 \)-optimal dictionaries, we have proposed a novel measure of quality in \cite{sheriff2017frame}, and further developments for algorithmic synthesis of large-scale control systems will be reported elsewhere. Besides these applications in control theory and practice, one of the key objective of our work here is to investigate and understand the physical nature of the \( \ell_2 \)-optimal dictionaries independent of their connection with control theory. Such a study will shed light on other control theoretic properties of observability and estimation.

There has been significant recent research into finding optimal dictionaries, briefly outlined in \cite{tovsic2011dictionary}; current research centers around the development of learning algorithms for finding optimal dictionaries. Much of the thrust is on arriving at dictionaries that offer sparse representations of sample vectors. One of the first learning algorithms to develop a dictionary that offers sparse representation of images was given in \cite{olshausen1997sparse}. Since then many learning algorithms have been developed to obtain dictionaries that offer sparse representation along with other special properties such as online computation capability \cite{mairal2009online}, better classification property \cite{mairal2009supervised,yang2011fisher}, better adaptive  properties \cite{skretting2010recursive}; several other algorithms are given in \cite{kreutz2003dictionary,yaghoobi2009dictionary,mallat1993matching}.

The problem addressed in this article differs from the mainstream research of finding dictionaries offering sparse (\(\ell_0\)-optimal) representations in the sense that our objective is to find dictionaries that give minimum average $\ell_2$-norm of the coefficient vector used for representation. Intuitively, optimization of the $\ell_2$-norm of the representation vector tends to `distribute' the information of the data being represented among all components of the representation vector; this makes the representation robust to accidental changes in the coefficients.
\begin{itemize}[label = \( \circ \), leftmargin = *]
	\item An advantage of considering the $\ell_2$-cost is that it involves a norm arising from an inner product; consequently, it comes with a rich set of properties associated with it. These properties are crucially employed in this article to modify the intrinsically non-convex problem of finding an \(\ell_2\)-optimal dictionary into an \emph{equivalent} convex optimization problem,\footnote{By equivalence of two optimization problems we mean that an optimal solution to either of the problems can be obtained from an optimal solution to the other problem.} allowing us to compute an optimal dictionary in \emph{polynomial time} and arrive at \emph{analytical expressions of the optimal costs}. We provide these algorithms in Sections \ref{sec:proof-of-DL-theorem} and \ref{sec:proof-of-general-theorem}.

	\item One more advantage of considering optimization in the $\ell_2$-sense is related to the fact that the $\ell_2$-cost involves the natural notion of \emph{energy} which is extremely important in practice, especially in control theoretic applications.

	\item The results presented here also add to the recent developments in the advantages of representing signals/vectors using tight frames for finite-dimensional Hilbert spaces.
\end{itemize}

This article unveils as follows: In Section \ref{The Dictionary Learning problem} we formally introduce our problem of finding an optimal dictionary which offers least square representation. Section \ref{The Dictionary Learning problem} is the heart of this article, where we solve the problem of finding an \(\ell_2\)-optimal dictionary, and arrive at an explicit solution. Algorithms to construct \(\ell_2\)-optimal dictionaries are given in Section \ref{Proofs}, where we present the proofs of our main results. The case of representing random vectors distributed uniformly on the unit sphere is treated in Subsection \ref{uniform section}; we demonstrate that the \(\ell_2\)-optimal dictionaries in this case are \emph{finite tight frames}. The intermediate Section \ref{matrices} contains results related to rank-\(1\) decomposition of positive semidefinite matrices; these constitute essential tools for the solutions of our main results. We conclude in Section \ref{s:conclusion} with a summary of this work and future directions.

\subsection*{Notations}
We employ standard notations in this article. As usual, \(\norm{\cdot}\) is the standard Euclidean norm. The \(n\times n\) identity and \(m\times n\) zero matrices are denoted by \(I_n\) and \(O_{m\times n}\), respectively. For a matrix \(M\) we let \(\trace(M)\) and \(M^+\) denote its trace and Moore-Penrose pseudo-inverse, respectively. The set of \(n\times n\) symmetric and positive (semi-)definite matrices with real entries is denoted by \(\posdef{n}\) (\(\possemdef{n}\)), and the set of \(n\times n\) symmetric matrices with real entries is denoted by \(\mathbb{S}^{n\times n}\). For a Borel probability measure \(\mu\) defined on \(\R^n\), we let \(\EE_\mu[\cdot]\) denote the corresponding mathematical expectation. The image of a map \(f\) is written as \(\image(f)\). The gradient of a continuously differentiable function \(f\) is denoted by \(\nabla f\). For finite ordered sets \(A\) and \(B\), we let \(A\uplus B\) denote the ordered set consisting of the elements (in their order) of \(A\) followed by the elements (in their order) of \(B\); for instance, if \(A = (1, 2)\) and \(B = (-5, -7)\), then \(A\uplus B = (1, 2, -5, -7)\). Suppose that $A$ and $B$ are two ordered sets such that $B \subset A$ as sets, then $A \setminus B$ is the ordered sub-collection in $A$ after deleting the elements of the set $B$. Finally, given an ordered collection of vectors \((x_i)_{i=1}^n\) in \(\R^\nu \) with \( \nu \ge n\) and equipped with the standard inner product, \(\Ortho\bigl((x_i)_{i=1}^n\bigr)\) gives the result of Gram-Schmidt orthonormalization of the collection \((x_i)_{i=1}^n\) considered in the order in which they appear i.e., \( x_1, x_2 ,\ldots , x_n\).

\section{The $\ell_2$-optimal dictionary problem and its solution}
\label{The Dictionary Learning problem}
Let $V$ denote an \(\R^n\)-valued random vector defined on some probability space, and having distribution (i.e., Borel probability measure,) $\mu$. We assume that \(V\) has finite variance. Let $R_V$ denote the support of $\mu$,\footnote{Recall \cite[Theorem 2.1, Definition 2.1, pp.\ 27-28]{ref:Par-05} that the support of \(\mu\) is the set of points \(z\in\R^n\) such that the \(\mu\)-measure of every open neighbourhood of \(z\) is positive.} and let $X_V$ be the smallest subspace of \(\R^n\) containing $R_V$. Our goal is to represent the instances/samples of $V$ with the help of a \emph{dictionary} of vectors:
\[
	D_K \Let \set[\big]{\dict{i} \in \R^n \suchthat \norm{\dict{i}} = 1 \text{ for} \; i = 1,\ldots, K}\quad \text{ with a given \(K \geq n\)},
\]
in an optimal fashion. A \emph{representation} of an instance \(v\) of the random vector $V$ is given by the coefficient vector  $\alpha = (\alpha_1 \; \ldots \; \alpha_K)^\top $, such that
\begin{equation}
\label{21}
v = \sum_{i = 1}^K \alpha_i \dict{i}.
\end{equation}
A \emph{reconstruction} of the sample $v$ from the representation \(\alpha\) is carried out by taking the linear combination $\sum_{i = 1}^K \alpha_i \dict{i}$. We define the \emph{cost} associated with representing $v$ in terms of the coefficient vector $\alpha$ as $\sum_{i = 1}^K \alpha_i^2$. Since the dictionary vectors $\{ \dict{i} \}_{i = 1}^K$ must be able to represent any sample of $V$, the property that $\Span \{ \dict{i} \}_{i = 1}^K \supset R_V$ is essential. A dictionary $D_K = \{ \dict{i} \}_{i = 1}^K \subset\R^n$ is said to be \emph{feasible} if $\Span \{ \dict{i} \}_{i = 1}^K \supset R_V$. We denote by $\mathcal{D}_K$ the set of all feasible dictionaries.

For a feasible dictionary \(D_K = \{\dict{i}\}_{i=1}^K\), with \( m \Let \dim \left( \Span \{ \dict{i} \}_{i = 1}^K \right) \), and for any \(v\in R_V\), the linear equation \eqref{21} is satisfied by infinitely many values of $\alpha$ whenever $K > m$. In fact, the solution set of \eqref{21} constitutes a $(K - m)$-dimensional affine subspace of $\mathbb{R}^K$. Therefore, in order to represent a given $v$ uniquely, one must define a mechanism of selecting a particular point from this affine subspace, thus making the coefficient vector $\alpha = (\alpha_1 \; \ldots \; \alpha_K)^\top$ a function of $v$. Let \(f\) denote such a function; to wit, $f(v) \Let \alpha$ is the coefficient vector used to represent the sample $v$. We call such a map $R_V \ni v  \longmapsto f(v) \in \mathbb{R}^K$ a \emph{scheme of representation}. Representation of samples of the random vector \(V\) using a dictionary $D_K$ and a scheme $f$ is said to be \emph{proper} if any vector $v \in R_V$ can be uniquely represented and then exactly reconstructed back. It is clear that for proper representation of $V$ with a dictionary \(D_K\) consisting of vectors $\{ \dict{i} \}_{i = 1}^K$, the mapping  $R_V \ni v  \longmapsto f(v) \in \mathbb{R}^K$ should be an injection that satisfies
\begin{equation}
\label{injectivity}
V = \pmat{ \dict{1} }{ \dict{2} }{ \dict{K} }f(V) \quad \text{$\mu$-almost surely.}
\end{equation}
A scheme $f$ of representation is said to be \emph{feasible} if for some feasible dictionary $D_K \Let \{ \dict{i} \}_{i = 1}^K \in \mathcal{D}_K$ the equality $\pmat{ \dict{1} }{ \dict{2} }{ \dict{K} }f(V) = V$ is satisfied almost surely. We denote by $\mathcal{F}$ the set of all feasible schemes of representation. 

Given a scheme $f$ of representation, the (random) cost associated with representing $V$ is given by $\norm{f(V)}^2$. The problem of finding an \(\ell_2\)-optimal dictionary can now be posed as:
\begin{quote}
	Find a pair consisting of a dictionary $D_K\opt \in \mathcal{D_K}$ and a feasible scheme \(f\opt\) of representation such that the average cost $\EE_\mu \bigl[ \norm{f^*(V)}^2 \bigr]$ of representation is minimal.
\end{quote}
Here the subscript $\mu$ indicates the distribution of random vector $V$ with respect to which the expectation is evaluated. In other words, we have the following optimization problem:
\begin{equation}
	\label{DL problem general}
	\begin{aligned}
		& \minimize_{D_K, f}	&& \EE_\mu\bigl[\norm{f(V)}^2\bigr]\\
		& \sbjto				&& 
		\begin{cases}
			D_K \in \mathcal{D}_K, \\
			f \in \mathcal{F}.
		\end{cases}
	\end{aligned}
\end{equation}

The problem given in \eqref{DL problem general} will be referred to as the \emph{$\ell_2$-optimal dictionary} problem. It should be noted that the $\ell_2$-optimal dictionary problem is non-convex due to the constraint that the dictionary vectors $ \{ \dict{i} \}_{i = 1}^K $ of a feasible dictionary must be of unit length. Even if we change this constraint to $ \left\{ \norm{ \dict{i} } \leq 1 \right\}$ from $ \left\{ \norm{ \dict{i} } = 1 \right\} $, which makes the feasible region of dictionary vectors convex, the set of feasible schemes of representation is not known to be a convex set a priori.

In this article we solve the $\ell_2$-optimal dictionary problem given in \eqref{DL problem general} in two steps:
\begin{enumerate}[label=(Step \Roman*), leftmargin=*, align=left, widest=II]
	\item We let $X_V = \R^n$.
	\item We let \(X_V\) be any proper nontrivial subspace of \(\R^n\).\footnote{The trivial case of \(X_V = \{0\}\) is discarded because then there is nothing to prove; we therefore limit ourselves to `nontrivial' subspaces of \(\R^n\).}
\end{enumerate}
The remainder of this section is devoted to describing Steps I and II by exposing our main results, followed by discussions, a numerical example, and a treatment of the important case of the uniform distribution on the unit sphere of \(\R^n\).

%===============================================================================
\subsection{Step I: \(X_V = \R^n\)}

If $X_V = \mathbb{R}^n$, a dictionary of vectors $D_K = \{ \dict{i} \}_{i = 1}^K \subset \mathbb{R}^n$ is feasible if and only if $\norm{ \dict{i} } = 1$ for all $i = 1,\ldots, K$, and $\Span\{ \dict{i} \}_{i = 1}^K = \mathbb{R}^n$. Thus, the $\ell_2$-optimization problem \eqref{DL problem general} reduces to:
\begin{equation} 
	\label{DL problem}
	\begin{aligned}
		& \minimize_{\{d_i\}_{i=1}^K, f}	&& \EE_\mu\bigl[\norm{f(V)}^2\bigr]\\
		& \sbjto							&&
		\begin{cases}
			\norm{ \dict{i} } = 1 \text{ for all $i = 1,\ldots,K,$}\\
			\Span\{ \dict{i} \}_{i = 1}^K = \mathbb{R}^n, \\
			\pmatr{\dict{1} & \dict{2} & \cdots & \dict{K}}f(V) = V \text{\ \ \(\mu\)-almost surely}.
		\end{cases}
	\end{aligned}
\end{equation}

Let $\Sigma_V \Let \EE_\mu[VV\transp]$. We claim that $\Sigma_V$ is positive definite. Indeed, if not, then there exists a nonzero vector $x \in \mathbb{R}^n$ such that $x^\top V = 0$ almost surely, which contradicts the assumption that $X_V = \mathbb{R}^n$. 

Existence and characterization of the optimal solutions to \eqref{DL problem}   is done by the following:
% Let $\Sigma_V^{1/2}$ denote the real symmetric positive definite square root of $\Sigma_V$, an optimal solution of \eqref{DL problem} is characterized in terms of $\Sigma_V^{1/2}$ by the following theorem.
\begin{theorem}
\label{DL theorem}
Consider the optimization problem \eqref{DL problem}, and let \(\Sigma_V \Let \EE_\mu\bigl[V V\transp\bigr]\).
\begin{itemize}[label=\(\circ\), leftmargin=*]
	\item \eqref{DL problem} admits an optimal solution.
	\item The optimal value corresponding to \eqref{DL problem} is $\dfrac{\bigl(\trace(\Sigma_V^{1/2})\bigr)^2}{K}$.
	\item Optimal solutions of \eqref{DL problem} are characterized by:
		\begin{itemize}[label=\(\triangleright\), leftmargin=*]
			\item a dictionary $D_K^* = \{ \dict{i}^*\}_{i = 1}^K$ that is feasible for \eqref{DL problem} and that satisfies
				\begin{equation}
					\sum_{i = 1}^K  \dict{i}^*{ \dict{i}^*}^\top = M^* \Let \; \frac{K}{\trace\bigl( \Sigma_V^{1/2} \bigr)}\; \Sigma_V^{1/2},
				\end{equation}
				and
			\item a scheme $f_{D_K^*}^*(v) \Let \pmat{ \dict{1}^*}{ \dict{2}^*}{ \dict{K}^*}^+ v$.
		\end{itemize}
\end{itemize}
Moreover, all optimal dictionary-scheme pairs can be obtained via the procedure described in Algorithm \ref{procedure 2} on p.\ \pageref{procedure 2}.
\end{theorem}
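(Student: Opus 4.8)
The plan is to decouple the minimisation over the representation scheme $f$ from the minimisation over the dictionary, and then to recast what remains as a convex problem over a single Gram-type matrix. First I would fix a feasible dictionary $D_K = \{\dict{i}\}_{i=1}^K$ and set $D \Let \pmatr{\dict{1} & \cdots & \dict{K}} \in \R^{n\times K}$. Feasibility for \eqref{DL problem} forces $\Span\{\dict{i}\}_{i=1}^K = \R^n$, so $D$ has full row rank and $D^+ = D\transp(DD\transp)^{-1}$ with $(D^+)\transp D^+ = (DD\transp)^{-1}$. For any feasible scheme $f$ attached to this dictionary we have $Df(V) = V$ $\mu$-almost surely, hence $f(V)$ is $\mu$-a.s.\ a solution of the linear system $D\alpha = V$; since the minimum-norm solution of that system is $D^+V$, it follows that $\norm{f(V)}^2 \ge \norm{D^+V}^2$ $\mu$-a.s., and therefore $\EE_\mu\bigl[\norm{f(V)}^2\bigr] \ge \EE_\mu\bigl[\norm{D^+V}^2\bigr]$, with equality attained by the (linear, hence measurable and feasible) scheme $v \mapsto D^+v$. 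Using finiteness of the second moment $\Sigma_V = \EE_\mu[VV\transp]$, the optimal cost associated with the dictionary $D_K$ equals $\trace\bigl((D^+)\transp D^+\,\Sigma_V\bigr) = \trace\bigl(M^{-1}\Sigma_V\bigr)$, where $M \Let DD\transp = \sum_{i=1}^K \dict{i}\dict{i}\transp$.

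Next I would pass to the matrix variable $M$. Since each $\norm{\dict{i}} = 1$, every such $M$ lies in $\possemdef{n}$ with $\trace(M) = K$, and feasibility of the dictionary is equivalent to $M \in \posdef{n}$. Conversely, the rank-$1$ decomposition results of Section \ref{matrices} furnish, for any $M \in \possemdef{n}$ with $\rank(M) \le K$, a decomposition $M = \sum_{i=1}^K \dict{i}\dict{i}\transp$ into unit vectors exactly when $\trace(M) = K$; as $K \ge n \ge \rank(M)$ the rank constraint is vacuous, so every $M \in \posdef{n}$ with $\trace(M) = K$ is realised by some feasible dictionary. Consequently \eqref{DL problem} is equivalent to the convex program of minimising $\trace(M^{-1}\Sigma_V)$ over $M \in \posdef{n}$ with $\trace(M) = K$.

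I would then solve this convex program directly. Writing $\trace(\Sigma_V^{1/2}) = \trace\bigl(M^{1/2}\cdot M^{-1/2}\Sigma_V^{1/2}\bigr)$ and applying the Cauchy--Schwarz inequality for the trace inner product to the pair $M^{1/2}$ and $M^{-1/2}\Sigma_V^{1/2}$ yields $\bigl(\trace(\Sigma_V^{1/2})\bigr)^2 \le \trace(M)\cdot\trace\bigl(\Sigma_V^{1/2}M^{-1}\Sigma_V^{1/2}\bigr) = K\,\trace(M^{-1}\Sigma_V)$, so the optimal value is at least $\bigl(\trace(\Sigma_V^{1/2})\bigr)^2/K$. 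Equality in Cauchy--Schwarz holds precisely when $M^{-1/2}\Sigma_V^{1/2}$ is a scalar multiple of $M^{1/2}$, i.e.\ when $\Sigma_V^{1/2}$ is a scalar multiple of $M$; combined with $\trace(M) = K$ this pins down $M = M^* \Let \frac{K}{\trace(\Sigma_V^{1/2})}\Sigma_V^{1/2}$, and a direct substitution confirms $\trace\bigl((M^*)^{-1}\Sigma_V\bigr) = \bigl(\trace(\Sigma_V^{1/2})\bigr)^2/K$. This gives at once the existence of an optimal solution, the claimed optimal value, and uniqueness of the optimal Gram matrix $M^*$. Unwinding the reductions: the optimal dictionaries are exactly the unit-vector rank-$1$ decompositions of $M^*$ (nonempty by Section \ref{matrices}), and for each such $D_K^*$ the attached optimal scheme is $f^*(v) = {D^*}^+v$; since the minimum-norm solution of a linear system is pointwise unique, any scheme optimal for $D_K^*$ coincides with this one $\mu$-a.s. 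Finally, Algorithm \ref{procedure 2} is exactly this chain rendered constructive --- form $\Sigma_V$ and $\Sigma_V^{1/2}$, form $M^*$, run the constructive rank-$1$ decomposition of Section \ref{matrices} on $M^*$, and output the resulting dictionary together with its pseudo-inverse scheme --- so completeness of the algorithm reduces to the fact, supplied by Section \ref{matrices}, that its free parameters can be chosen to realise every unit-vector rank-$1$ decomposition of $M^*$.

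The step I expect to be the main obstacle is precisely this two-way correspondence between feasible dictionaries and trace-$K$ positive definite matrices: showing that \emph{every} such matrix is realised, and that the constructive decomposition can be steered to produce \emph{every} realisation so that ``all optimal pairs'' are genuinely enumerated. This is exactly the job of the rank-$1$ decomposition machinery of Section \ref{matrices}, so the real work is front-loaded there; the remaining points --- measurability of $v \mapsto D^+v$, finiteness of the relevant expectations via finiteness of $\EE_\mu[VV\transp]$, and the interchange of pointwise and expectation minimisation --- are routine.
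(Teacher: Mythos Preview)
Your proposal is correct and tracks the paper's proof closely in its architecture: first eliminate the scheme by showing the pseudo-inverse $D^+v$ is pointwise optimal, then pass to the Gram matrix $M = DD\transp$ and show via the rank-$1$ machinery of Section~\ref{matrices} that the feasible set becomes exactly $\{M \in \posdef{n} : \trace(M) = K\}$, and finally solve the resulting problem in $M$.

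The genuine difference lies in how you solve the reduced problem \(\min_M \trace(\Sigma_V M^{-1})\) subject to \(\trace(M) = K\). The paper first establishes matrix convexity of \(M \mapsto \trace(\Sigma_V M^{-1})\) on \(\posdef{n}\), then writes down the Lagrangian, computes the stationarity condition \((M^*)^{-1}\Sigma_V(M^*)^{-1} = \gamma I_n\), and solves for \(M^*\) and \(\gamma\). Your Cauchy--Schwarz argument with the pair \(M^{1/2}\) and \(M^{-1/2}\Sigma_V^{1/2}\) sidesteps both the convexity verification and the Lagrangian calculus entirely, delivers the lower bound and the equality case in one stroke, and yields uniqueness of \(M^*\) automatically from the equality condition. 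This is a cleaner and more self-contained route to the same endpoint; the paper's KKT approach buys nothing extra here, though it generalises more readily to the affine constraint \(\trace(M(B\transp B)) = K\) appearing in the proof of Theorem~\ref{DL general theorem}. Your identification of the ``all optimal pairs'' assertion as resting on the surjectivity of the constructive decomposition in Section~\ref{matrices} is accurate; the paper does not argue this point in any more detail than you do.
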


%===============================================================================
\subsection{Step II: \(X_V\) is a strict nontrivial subspace of \(\R^n\)}

Let \(X_V\) be any proper nontrivial subspace of \(\R^n\). In this situation it is reasonable to expect that no optimal dictionary that solves \eqref{DL problem general} contains elements that do not belong to \(X_V\). That this indeed happens is the assertion of the following Lemma, whose proof is provided in Section \ref{Proofs}:
\begin{lemma}
	\label{lemma 2}
	Optimal solutions, if any exists, of problem \eqref{DL problem general} are such that the optimal dictionary vectors $\{\dict{i}\opt\}_{i = 1}^K$ satisfy $\dict{i}\opt \in X_V$ for all $i = 1,\ldots,K$.
\end{lemma}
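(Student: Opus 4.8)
The plan is to argue by contradiction: suppose we have an optimal dictionary-scheme pair $\bigl(D_K\opt, f\opt\bigr)$ for \eqref{DL problem general} in which at least one dictionary vector, say $\dict{j}\opt$, has a nonzero component orthogonal to $X_V$. I would decompose each $\dict{i}\opt = p_i + q_i$, where $p_i$ is the orthogonal projection of $\dict{i}\opt$ onto $X_V$ and $q_i\in X_V^\perp$. Since the samples of $V$ lie in $X_V$ $\mu$-almost surely and $\pmat{\dict{1}\opt}{\dict{2}\opt}{\dict{K}\opt}f\opt(V) = V$ a.s., projecting this identity onto $X_V$ shows that $\pmat{p_1}{p_2}{p_K}f\opt(V) = V$ a.s. as well; hence the \emph{projected} collection $(p_i)_{i=1}^K$ together with the \emph{same} scheme $f\opt$ still reconstructs $V$ exactly. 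Moreover $\Span\{p_i\}_{i=1}^K \supset R_V$, since projecting a spanning set of a superspace of $X_V$ onto $X_V$ yields a spanning set of (a superspace of $X_V$ intersected with $X_V$, hence of) $R_V$. The only obstruction is that the $p_i$ need not be unit vectors — in fact $\norm{p_i} = \sqrt{1 - \norm{q_i}^2} \le 1$, with strict inequality for $i = j$.

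Next I would renormalize: set $\tilde d_i \Let p_i/\norm{p_i}$ for each $i$ (all $p_i$ are nonzero because $\norm{\dict{i}\opt}=1$ forces $\norm{p_i}^2 = 1-\norm{q_i}^2$, and if some $p_i$ vanished it would be because $\dict{i}\opt\in X_V^\perp$, a vector that can simply be discarded and replaced — but more cleanly one notes the set of $i$ with $p_i=0$ contributes nothing to reconstructing $V$ and can be handled by a separate elementary argument, or one assumes WLOG every $p_i\neq 0$). Then $\tilde D_K \Let \{\tilde d_i\}_{i=1}^K$ is a genuine feasible dictionary, and the natural companion scheme is $\tilde f(v) \Let \bigl(\norm{p_1} f\opt_1(v), \ldots, \norm{p_K} f\opt_K(v)\bigr)$, i.e., the $i$-th coordinate of $f\opt$ scaled by $\norm{p_i}$, so that $\sum_i \tilde f_i(v)\tilde d_i = \sum_i f\opt_i(v) p_i = v$ a.s. — feasibility of $\tilde f$ is immediate. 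The cost comparison is then
\[
	\EE_\mu\bigl[\norm{\tilde f(V)}^2\bigr] = \EE_\mu\Bigl[\sum_{i=1}^K \norm{p_i}^2 f\opt_i(V)^2\Bigr] \le \EE_\mu\Bigl[\sum_{i=1}^K f\opt_i(V)^2\Bigr] = \EE_\mu\bigl[\norm{f\opt(V)}^2\bigr],
\]
with strict inequality whenever $\norm{p_j}<1$ on a set of positive $\mu$-measure of $v$ for which $f\opt_j(v)\neq 0$. This contradicts optimality of $\bigl(D_K\opt, f\opt\bigr)$ — \emph{provided} we can rule out that $f\opt_j(V) = 0$ $\mu$-almost surely.

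That last point is the main obstacle, and it is really the crux of the argument: a priori the optimal scheme might assign zero coefficient to a useless out-of-subspace vector $\dict{j}\opt$, in which case shrinking it changes nothing and gives no contradiction. I would handle this by observing that if $f\opt_j(V) = 0$ a.s., then $\dict{j}\opt$ is never actually used, so we may replace $\dict{j}\opt$ by \emph{any} unit vector in $X_V$ — for instance by a copy of $\dict{1}\opt$ (which, after the projection argument above, we may assume lies in $X_V$) — without changing the cost, thereby exhibiting a different optimal solution whose $j$-th vector \emph{is} in $X_V$; iterating over all offending indices produces an optimal solution entirely inside $X_V$. Alternatively, and perhaps more in the spirit of the paper, one can strengthen the statement by noting that among all optimal pairs there is one with the projected/renormalized structure, so \emph{without loss of generality} every optimal dictionary can be taken inside $X_V$; since the lemma is used downstream only to reduce \eqref{DL problem general} to the full-dimensional problem \eqref{DL problem} on $X_V$, this suffices. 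I would also remark that the unit-norm renormalization step is exactly where convexity is lost and why the argument must be done by this explicit surgery rather than by a smooth perturbation.
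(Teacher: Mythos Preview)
Your argument is correct and matches the paper's approach almost exactly: orthogonally decompose each $\dict{i}\opt$, renormalize the $X_V$-components (the paper handles vanishing projections by replacing them with a fixed unit vector $x\in X_V$, just as you suggest), rescale the scheme coordinatewise by $\norm{p_i}$, and compare costs to reach a contradiction. You are in fact more careful than the paper about the edge case $f\opt_j(V)=0$ $\mu$-a.s., which the paper glosses over when asserting the strict inequality; your ``replace and iterate'' / ``WLOG'' remedy is the right fix and is exactly what is needed for the downstream reduction.
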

Lemma \ref{lemma 2} guarantees that if the problem \eqref{DL problem general} admits a solution, then the corresponding optimal dictionary vectors must be elements of $X_V$. This means that it is enough to optimize over dictionaries with their elements in $X_V$ instead of the whole of $\mathbb{R}^n$. Therefore, the constraint $\Span\{\dict{i}\}_{i = 1}^K \supset R_V$ can be equivalently stated as $\Span\{\dict{i}\}_{i = 1}^K = X_V$.

Let the dimension of $X_V$ be $m$ with $m < n$, and let $\mathcal{B} = \{b_i\}_{i = 1}^m$ be a basis for $X_V$. It should be noted that $X_V = \image(\Sigma_V)$, and therefore, a basis of $X_V$ can be obtained by computing a basis of the subspace $\image(\Sigma_V)$. An example of such a basis of $X_V$ is the collection of unit eigenvectors of $\Sigma_V$ corresponding to its non-zero eigenvalues.

Fix a basis \(\mathcal B = \{b_i\}_{i=1}^m\) of \(X_V\). Let $B$ be a matrix containing the vectors $\{b_i\}_{i=1}^m$ as its columns:
\[
	B \Let \pmatr{b_1 & b_2 & \cdots & b_m}.
\]
If $\dictx{i}$ is the representation of the dictionary vector $\dict{i}$ in the basis $\mathcal{B}$, i.e., $\dict{i} = B \dictx{i}$, then the constraints on the family $\{\dict{i}\}_{i=1}^K$ get transformed to the following ones on $\{\dictx{i}\}_{i=1}^K$:
\begin{itemize}[label=\(\circ\), leftmargin=*]
\item $\norm{\dict{i}}^2 = 1\quad \Rightarrow\quad \dictx{i}^\top \bigl(B^\top B\bigr) \dictx{i} = 1$, and
\item $\Span\{\dict{i}\}_{i = 1}^K \supset R_V\quad\Rightarrow\quad\Span\{\dict{i}\}_{i = 1}^K = X_V\quad\Rightarrow\quad\Span\{\dictx{i}\}_{i = 1}^K = \mathbb{R}^m$.
\end{itemize}

We define the random vector
\[
	V_X \Let \bigl((B^\top B)^{-1}B^\top\bigr) V.
\]
Then $V_X$ is an $\mathbb{R}^m$ valued random vector which is the representation of random vector $V$ in the basis $\mathcal{B}$. For every scheme $f$ that is feasible for \eqref{DL problem general}, let us define an associated scheme for representing samples of the random vector $V_X$ by
\[
	\R^m\ni v\longmapsto f_X(v) \Let f(B v)\in\R^K.
\]
The conditions on feasibility of $f$ in \eqref{DL problem general} imply that the scheme $f_X$ is feasible if for a feasible dictionary of vectors $\{\dictx{i}\}_{i = 1}^K$,
\[
	\pmat{\dictx{1}}{\dictx{2}}{\dictx{K}} f_X(V_X) = V_X \quad\text{\(\mu\)-almost surely.}
\]
In other words, in contrast to the problem \eqref{DL problem general}, where the optimization is carried out over vectors in $\mathbb{R}^n$, we can equivalently consider the same problem in $\mathbb{R}^m$, but with the following modified constraints: 
\begin{equation}\label{DL problem general 2}
\begin{aligned}
	& \minimize_{\{\dictx{i}\}_{i=1}^K, f_X}	&& \EE_\mu\bigl[\norm{f_X(V_X)}^2\bigr]\\
	& \sbjto									&&
	\begin{cases}
		\dictx{i}^\top \bigl( B^\top B \bigr) \dictx{i} = 1 \text{ for all $i = 1,\ldots,K,$}\\
		\Span\{\dictx{i}\}_{i = 1}^K = \mathbb{R}^m,\\
		\pmatr{\dictx{1} & \dictx{2} & \cdots & \dictx{K}} f_X(V_X) = V_X \text{\ \ \(\mu\)-almost surely}.
	\end{cases}
\end{aligned}
\end{equation}

In relation to the problem \eqref{DL problem general 2} let us define the following quantities
\begin{equation}
\label{H}
\left\{
\begin{aligned}
\Sigma_V	&\Let \EE_{\mu}[ V V^\top] \\
\Sigma		&\Let (B^\top B)^{-1/2} \bigl(B^\top \Sigma_V B \bigr) (B^\top B)^{-1/2} \\
H^*			&\Let \frac{K}{\trace\bigl( \Sigma^{1/2} \bigr)}\bigl((B^\top B)^{-1/2}\Sigma^{1/2} (B^\top B)^{-1/2}\bigr).
\end{aligned}
\right.
\end{equation}
Since the support of $V_X$ is \(m\)-dimensional, we conclude from previous discussion that $\Sigma_{V_X} \Let \EE_{\mu} \bigl[ V_X V_X^\top \bigr]$ is positive definite. Since $\Sigma = (B^\top B)^{1/2} \Sigma_{V_X} (B^\top B)^{1/2}$, it follows that $\Sigma$ is positive definite, which in turn implies that $H^*$ is positive definite.

To summarize, an $\ell_2$-optimal dictionary-scheme pair that solves the optimization problem \eqref{DL problem general} is equivalently obtained from an optimal solution of the problem \eqref{DL problem general 2}, and is characterized by the following:
\begin{theorem}
\label{DL general theorem}
	Consider the optimization problem \eqref{DL problem general 2}. 
	\begin{itemize}[label=\(\circ\), leftmargin=*]
		\item \eqref{DL problem general 2} admits an optimal solution.
		\item The optimal value corresponding to \eqref{DL problem general 2} is $\dfrac{\bigl(\trace(\Sigma^{1/2})\bigr)^2}{K}$.
		\item Optimal solutions of \eqref{DL problem general 2} are characterized by:
			\begin{itemize}[label=\(\triangleright\), leftmargin=*]
				\item a dictionary $D_K^* = \{ \dictx{i}^*\}_{i = 1}^K$ that is feasible for \eqref{DL problem general 2} and that satisfies
					\begin{equation}
						\sum_{i = 1}^K  \dictx{i}^*{ \dictx{i}^*}^\top = H^*,
					\end{equation}
					and
				\item a scheme $f_X^*(u) \Let \pmat{ \dictx{1}^*}{ \dictx{2}^*}{ \dictx{K}^*}^+ u$.
			\end{itemize}
	\end{itemize}
	Consequently, an optimal solution of the $\ell_2$-optimal dictionary problem \eqref{DL problem general} consisting of an $\ell_2$-optimal dictionary-scheme pair is given by 
	\begin{itemize}[label=\(\circ\), leftmargin=*]
		\item A collection of vectors $\{\dict{i}^*\}_{i = 1}^K$ defined as \(\dict{i}^* \Let B \dictx{i}^*\) for \(i = 1,2,\ldots,K\), and
		\item the scheme $f^*(v) \Let \pmat{\dict{1}^*}{\dict{2}^*}{\dict{K}^*}^+ v$.
	\end{itemize}
	Moreover, all optimal dictionary-scheme pairs can be obtained via the procedure given in Algorithm \ref{algo:general_l2_opt_algo} on p.\ \pageref{algo:general_l2_opt_algo}.
\end{theorem}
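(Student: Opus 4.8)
The plan is to recognise \eqref{DL problem general 2} as an instance of the already–solved Step~I problem \eqref{DL problem} in ambient dimension $m$, via an invertible linear change of variables, then apply Theorem~\ref{DL theorem} and transport the result back to \eqref{DL problem general} using Lemma~\ref{lemma 2} together with the reduction to \(X_V\) already carried out before the statement.

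\textbf{Reduction to Step~I.} Given a feasible pair $\bigl(\{\dictx{i}\}_{i=1}^K, f_X\bigr)$ for \eqref{DL problem general 2}, put $e_i \Let (B^\top B)^{1/2}\dictx{i}$, define the $\R^m$-valued random vector $W \Let (B^\top B)^{1/2}V_X$, and set $g(w) \Let f_X\bigl((B^\top B)^{-1/2}w\bigr)$. Since $(B^\top B)^{1/2}$ is invertible, the constraint $\dictx{i}^\top(B^\top B)\dictx{i}=1$ is equivalent to $\norm{e_i}=1$, the condition $\Span\{\dictx{i}\}_{i=1}^K=\R^m$ to $\Span\{e_i\}_{i=1}^K=\R^m$, and $\pmatr{\dictx{1}&\cdots&\dictx{K}}f_X(V_X)=V_X$ to $\pmatr{e_1&\cdots&e_K}g(W)=W$ \(\mu\)-a.s.; moreover $\EE_\mu[\norm{g(W)}^2]=\EE_\mu[\norm{f_X(V_X)}^2]$. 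This is a bijection, preserving the objective, between feasible points of \eqref{DL problem general 2} and feasible points of the Step~I problem \eqref{DL problem} built from $W$ in $\R^m$. That problem is legitimate: $\EE_\mu[WW^\top]=(B^\top B)^{1/2}\Sigma_{V_X}(B^\top B)^{1/2}=\Sigma$, already known to be positive definite, so the support of $W$ spans $\R^m$.

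\textbf{Applying Theorem~\ref{DL theorem} and pulling back.} Theorem~\ref{DL theorem}, applied to $W$, yields existence of an optimum, optimal value $\bigl(\trace(\Sigma^{1/2})\bigr)^2/K$, the characterisation that $\{e_i^*\}$ is optimal iff it is feasible with $\sum_i e_i^*{e_i^*}^\top=\tfrac{K}{\trace(\Sigma^{1/2})}\Sigma^{1/2}$ and scheme $g^*(w)=\pmatr{e_1^*&\cdots&e_K^*}^+w$, and that all optimal pairs come from Algorithm~\ref{procedure 2}. Undoing the change of variables, set $\dictx{i}^*\Let(B^\top B)^{-1/2}e_i^*$; then $\sum_i\dictx{i}^*{\dictx{i}^*}^\top=(B^\top B)^{-1/2}\bigl(\sum_i e_i^*{e_i^*}^\top\bigr)(B^\top B)^{-1/2}=H^*$. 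Writing $\Delta^*\Let\pmatr{\dictx{1}^*&\cdots&\dictx{K}^*}$ and $E^*\Let\pmatr{e_1^*&\cdots&e_K^*}$, both have full row rank $m$ by the spanning constraint, so the identity $M^+=M^\top(MM^\top)^{-1}$ applies to each; from $\Delta^*=(B^\top B)^{-1/2}E^*$ a short calculation gives $(\Delta^*)^+=(E^*)^+(B^\top B)^{1/2}$, hence $f_X^*(u)=(\Delta^*)^+u=g^*\bigl((B^\top B)^{1/2}u\bigr)$, matching the stated scheme. This proves the first three bullets. To pass to \eqref{DL problem general}, recall that by Lemma~\ref{lemma 2} every optimal dictionary lies in $X_V=\image(B)$ and that, writing $\dict{i}=B\dictx{i}$, the discussion preceding the theorem makes \eqref{DL problem general} equivalent to \eqref{DL problem general 2}, so its optimal value is again $\bigl(\trace(\Sigma^{1/2})\bigr)^2/K$. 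Set $\dict{i}^*\Let B\dictx{i}^*$, $D^*\Let\pmatr{\dict{1}^*&\cdots&\dict{K}^*}=B\Delta^*$; since $B$ has full column rank $m$ and $\Delta^*$ full row rank $m$ this is a rank factorisation, so $(D^*)^+=(\Delta^*)^+B^+$ with $B^+=(B^\top B)^{-1}B^\top$, and $D^*(D^*)^+$ is the orthogonal projection onto $\image(D^*)=\image(B)=X_V$. As $V\in X_V$ \(\mu\)-a.s.\ we have $V=BV_X$ a.s., whence $D^*(D^*)^+V=V$ a.s.\ (so $f^*(v)\Let(D^*)^+v$ is feasible) and $(D^*)^+V=(\Delta^*)^+B^+V=(\Delta^*)^+V_X=f_X^*(V_X)$, so $\bigl(\{\dict{i}^*\},f^*\bigr)$ attains the optimum and is therefore optimal for \eqref{DL problem general}; reversing the changes of variables shows that every optimal dictionary-scheme pair is produced by Algorithm~\ref{algo:general_l2_opt_algo}.

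\textbf{Main obstacle.} The substantive care is in the pseudo-inverse bookkeeping: one must exploit the full-rank structure of $B$ and of $\Delta^*$ (equivalently $E^*$) to justify $(B\Delta^*)^+=(\Delta^*)^+B^+$ and $(\Delta^*)^+=(E^*)^+(B^\top B)^{1/2}$, since pseudo-inversion is not multiplicative in general; and one must verify that the transformed problem genuinely meets the hypothesis of Theorem~\ref{DL theorem}, which is exactly where positive definiteness of $\Sigma$ is used.
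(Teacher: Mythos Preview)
Your proof is correct. Both your argument and the paper's hinge on the same change of variables via $(B^\top B)^{1/2}$, but you apply it at a different stage: you transform the \emph{problem} \eqref{DL problem general 2} itself into an instance of \eqref{DL problem} (with $W=(B^\top B)^{1/2}V_X$ and $e_i=(B^\top B)^{1/2}\dictx{i}$) and then invoke Theorem~\ref{DL theorem} as a black box. The paper, by contrast, first re-runs the opening steps of the proof of Theorem~\ref{DL theorem} (the pseudo-inverse reduction and the $R=h(S)$ argument) in the ellipsoidal setting to reach the matrix problem \eqref{modified general DL 2}, and only then substitutes $M=(B^\top B)^{1/2}H(B^\top B)^{1/2}$ to land on \eqref{DL 2}. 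Your organisation is more economical, since it avoids repeating material already proved; the price is the explicit pseudo-inverse bookkeeping --- the identities $(\Delta^*)^+=(E^*)^+(B^\top B)^{1/2}$ and $(B\Delta^*)^+=(\Delta^*)^+B^+$ via the full-rank and rank-factorisation structure --- which you correctly flag as the main obstacle and which the paper sidesteps by working at the matrix level throughout.
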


%===============================================================================
\subsection{Discussion and a numerical example}
\begin{remark}
	\label{r:linearity of optimal schemes}
	The problem \eqref{DL problem general} does not a priori hypothesize an affine/linear structure of candidate schemes. The fact that linear schemes are optimal in \eqref{DL problem general} is one of the crucial assertions of both Theorem \ref{DL theorem} and Theorem \ref{DL general theorem}.
\end{remark}

\begin{remark}
	Algorithmic computation of an $\ell_2$-optimal dictionary relies on the second moment $\Sigma_V$ of the random vector $V$. Complete knowledge of the distribution \(\mu\) is, therefore, unnecessary. This is an advantage since in practical situations, learning/estimating $\Sigma_V$ from data is comparatively less demanding than getting a description of the distribution $\mu$ itself.
\end{remark}

\begin{remark}
\label{remark:robustness}
Let \( M \in \possemdef{n} \) be such that \( \image(M) = X_V \), let \( \mathcal{B} = \{ b_i \}_{i = 1}^m \) be a basis for \( X_V \) evaluated as a basis for \( \image(M) \). Let 
\begin{align*}
B         & \Let \pmat{b_1}{b_2}{b_m} \\
\Sigma(M) & \Let (B^\top B)^{-1/2} \bigl(B^\top M B \bigr) (B^\top B)^{-1/2} \\
H(M)		  & \Let \frac{K}{\trace \left( \big( \Sigma(M) \big)^{1/2} \right)}\bigl((B^\top B)^{-1/2} \big( \Sigma(M) \big)^{1/2} (B^\top B)^{-1/2}\bigr).
\end{align*}
Suppose that \( \{ \dict{i} \}_{i = 1}^K \) and \( f(\cdot) \) are the dictionary and the scheme obtained using the procedure given in Algorithm \ref{algo:general_l2_opt_algo} using \( M \) and \( K \) as inputs. By simplifying the pseudo-inverse \( \pmat{\dict{1}}{\dict{2}}{\dict{K}}^+ \) in \( f(\cdot) \), the average cost \( J(M) \) of representing \( V \) using the scheme \( f(\cdot) \) turns out to be
\begin{equation}
\label{eq:cost-robustness}
\begin{aligned}
J(M) & = \mathsf{E_{\mu}} \left[ V\transp B (B\transp B)^{-1} \big( H(M) \big)^{-1} (B\transp B)^{-1} B\transp V \right] \\
& = \trace \left( \big( H(M) \big)^{-1} (B\transp B)^{-1} B\transp \Sigma_V B (B\transp B)^{-1} \right)\\
& = \trace \left( \big( H(M) \big)^{-1} (B\transp B)^{-1/2} \; \Sigma \; (B\transp B)^{-1/2} \right) \\
& = \trace \left( (B\transp B)^{-1/2} \big( H(M) \big)^{-1} (B\transp B)^{-1/2} \; \Sigma \right) \\
& = \frac{1}{K} \; \trace \left( \big( \Sigma(M) \big)^{1/2} \right) \; \trace \left( \big( \Sigma(M) \big)^{-1/2} \Sigma \right).
\end{aligned}
\end{equation}
Let \( S \Let \set[\big]{T \in \possemdef{n} \suchthat \image(T) = X_V } \). Since the sequence of maps
\begin{align*}
S \; \ni \; & T \longmapsto \Sigma(T) \; \in \posdef{m}, \\
\posdef{m} \; \ni \; & T \longmapsto T^{1/2} \; \in \posdef{m}, \\
\posdef{m} \; \ni \; & T \longmapsto T^{-1} \Sigma \; \in \posdef{m}, \\
\possemdef{m} \; \ni \; & T \longmapsto \trace(T) \; \in \R,
\end{align*}
are, evidently, continuous, it follows at once that the map \(S \ni M \longmapsto J(M) \in \R \) is also continuous. If \( \widehat{\Sigma}_V \) denotes the estimated second moment of \( V \), and the estimation is carried out with a large enough number of samples of \( V \), with probability one we have \( \image(\widehat{\Sigma}_V) = X_V \). Therefore, by continuity of \( M\mapsto J(M) \), we see at once that 
\[
	J(\widehat{\Sigma}_V) \xrightarrow[\widehat{\Sigma}_V \longrightarrow \Sigma_V]{} J(\Sigma_V) = \frac{\big( \trace(\Sigma^{1/2}) \big)^2}{K}.
\]
\end{remark}

\begin{remark}
	The optimal average cost of representation of a random vector $V$ is inversely proportional to the size \(K\) of the optimal dictionary, as is evident from the optimal costs in Theorems \ref{DL theorem} and \ref{DL general theorem}. To wit, the optimal average cost of representation decreases monotonically with $K$, which is expected.
\end{remark}

\begin{remark}
\label{scalar multiple}
$\ell_2$-optimal dictionaries for representing a random vector $V$ are also optimal for representing any scalar multiple \(\alpha V\) of $V$ for any $0 \neq \alpha \in \mathbb{R}$. Indeed, it is clear that $H^*$ defined in \eqref{H} is invariant under nonzero scalar multiplications of $V$. Therefore, \(\ell_2\)-optimal dictionaries are also invariant under nonzero scalar multiplications of the random vector $V$. This fact also follows from the observation made in Remark \ref{r:linearity of optimal schemes}.
\end{remark}

\begin{remark}
	An $\ell_2$-optimal dictionary as characterized by Theorem \ref{DL general theorem} appears there in the form of what is known as a \emph{rank-\(1\) decomposition} of the positive definite matrix $H^*$. Elements of the theory of rank-\(1\) decompositions of positive definite matrices is discussed below in Section \ref{matrices}. This particular decomposition plays a crucial r\^ole in transforming the search space of the $\ell_2$-optimal dictionary problem \eqref{DL problem} from the set of dictionaries to the set of symmetric positive definite matrices with real entries, and translating the non-convex $\ell_2$-optimal dictionary problem into a tractable convex one.
\end{remark}

\begin{remark}
	All $\ell_2$-optimal dictionaries are unique upto rank-\(1\) decompositions of a unique positive definite matrix that is obtained from the second moment $\EE[VV\transp]$ of the random vector $V$. That is, for a given random vector whose samples are to be optimally represented, every $\ell_2$-optimal dictionary is obtained from a rank-\(1\) decomposition of a unique positive definite matrix.
\end{remark}

\begin{remark}
	Looking ahead at Algorithm \ref{algo:general_l2_opt_algo}, it becomes evident that non-uniqueness of optimal dictionaries can be attributed to the non-uniqueness in the selection of $C$ in Step 5 of Algorithm \ref{algo:general_l2_opt_algo}, and the element of choice associated to the selection of $p_j$ and $p_k$ in Step 2 of Algorithm \ref{Algo1}. The number of optimal solutions may be infinite depending on the distribution of the random vector $V$. For instance, if $V$ is uniformly distributed over the unit sphere of $\mathbb{R}^n$ and $K = n$, then the elements in an $\ell_2$-optimal dictionary form an orthonormal basis of $\mathbb{R}^n$. (The special case of uniform distribution of $V$ over spheres is discussed in Section \ref{uniform section}.) Of course, there are infinitely many orthonormal bases of $\mathbb{R}^n$ for \(n \ge 2\).
\end{remark}

\begin{remark}
	From Algorithm \ref{algo:general_l2_opt_algo} on p.\ \pageref{algo:general_l2_opt_algo} we can infer that by calculating the matrix $B$ there, consisting of the eigenvectors of $\Sigma_V$ corresponding to its non-zero eigenvalues, the computations of $( B^\top B )^{-1/2}$, $\Sigma^{1/2}$, and $C$ in the decomposition given in Step 5 become straightforward. Therefore, the chief computational load in Algorithm \ref{algo:general_l2_opt_algo} consists of eigen-decomposition of $\Sigma_V$ and that in Algorithm \ref{Algo1} (in Step 6), both of which can be performed in polynomial time.
\end{remark}

\begin{example}
	\label{ex:1}
Let $V = \pmatr{V_1\\ V_2}$ be a random vector taking values in $\mathbb{R}^2$, with $V_1$ and $V_2$ being independent random variables. Let the density functions of $V_1$ and $V_2$ be 
\[
	\rho_{V_1}(v) = 2(v - 1) \mathds{1}_{[1,2]}(v) \quad\text{and}\quad
	\rho_{V_2}(v) = 2(2 - v) \mathds{1}_{[1,2]}(v),
\]
respectively. The support of \(V\) is, therefore, the square \([1, 2]\times[1, 2]\). Elementary calculations lead to $\Sigma_V \Let \EE_\rho [VV^\top] = \pmatr{17/6 & 20/9 \\ 20/9 & 11/6}$. We employed the procedure described in Algorithm \ref{procedure 2} for the given matrix $\Sigma_V$ and $K = 3$ in \textsc{matlab}. An optimal dictionary \(\{y_1\opt, y_2\opt, y_3\opt\}\) was obtained, with
\[
	y_1^* = \pmatr{0.9789 \\ 0.2045},\quad y_2^* = \pmatr{0.6792 \\ 0.7339},\quad y_3^* = \pmatr{0.5870 \\ 0.8096};
\]
the optimum value of the objective function was reported to be \(1.8930\). This collection \(\{y_i\opt\}_{i=1}^3\) of optimal vectors are marked with crosses on the circumference of the unit circle shown in Figure \ref{figure1}. A second optimal dictionary $\{z_1^*,z_2^*,z_3^*\}$ was obtained, also using Algorithm \ref{procedure 2}, with dictionary vectors
\[
	z_1^* = \pmatr{0.4214 \\ 0.9069},\quad z_2^* = \pmatr{0.9284 \\ 0.3717},\quad z_3^* = \pmatr{0.8513 \\ 0.5247},
\]
with an identical optimal value as in the former case. The vectors \(\{z_i\opt\}_{i=1}^3\) are marked with dark circles on the circumference of the unit circle in Figure \ref{figure1}.
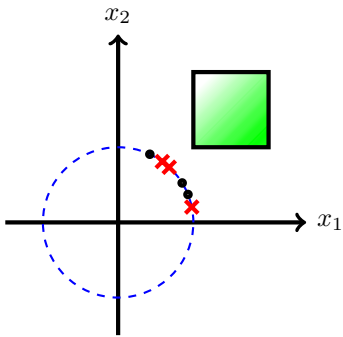
\begin{figure}[H]
\begin{center}
\begin{tikzpicture}[scale=1]
\draw[->,ultra thick] (-1.5,0)--(2.5,0) node[right]{$x_1$};
\draw[->,ultra thick] (0,-1.5)--(0,2.5) node[above]{$x_2$};
\draw[blue,thick,dashed] (0,0) circle (1cm);
\shade[top color=white, bottom color=green,shading angle=45,line width=2mm] (1,1) rectangle (2,2);
\draw[ultra thick](1,1) rectangle (2,2);
\draw[ultra thick](0.4214,0.9069) circle(1pt);
\draw[ultra thick](0.9284,0.3717) circle(1pt);
\draw[ultra thick](0.8513,0.5247) circle(1pt); 
%\draw (0,.5) node[cross,red] {};          
\draw[ultra thick] (0.9789,0.2045) node[cross=4pt,red] {};
\draw[ultra thick] (0.6792 ,0.7339) node[cross=4pt,red] {};
\draw[ultra thick] (0.5870,0.8096) node[cross=4pt,red] {};
\end{tikzpicture}
\caption{The two optimal dictionaries in Example \ref{ex:1}.}
\label{figure1}
\end{center}
\end{figure}
It is expected that the optimal dictionary vectors are concentrated towards the bottom right corner of the support \([1, 2]\times[1, 2]\) (the region with strong shading in figure \ref{figure1}). In the optimal solution $\{z^*_i\}_{i = 1}^3$, two vectors \(z_2^*\) and \(z^*_3\) point to the region where density of $V$ is concentrated the most. Also, for the solution $\{y^*_i\}_{i = 1}^3$, two vectors \(y^*_2\) and \(y^*_3\) are oriented towards the center of the square \([1, 2]\times[1, 2]\), with the remaining vector pointing towards the region of higher density. These results correlate positively with what may be expected out of \(\ell_2\)-optimal dictionaries.
\end{example}

%===============================================================================
\subsection{Uniform distribution over the unit sphere}
\label{uniform section}
We shall test our results on the important case of \(\mu\) being the uniform distribution on the unit sphere. Note that due to (rigid) rotational symmetry of the distribution, it follows that rigid rotations of optimal dictionaries in this case are also optimal.

Let us consider a dictionary consisting of (unit) vectors that are `close' to each other, i.e., the inner product between any two elements of the dictionary is close to \(1\). It is quite evident that such a dictionary is not optimal for representing uniformly distributed samples due to the fact that samples of \(V\) that are almost orthogonal to the dictionary vectors carry equal priority as any other vector but require large coefficients for their representation. It is, therefore, more natural to search for dictionaries in which the constituent vectors are `maximally spaced out'.

Several examples of collections of vectors that are `maximally spaced out' may be found in \cite[Section 4]{benedetto2003finite}. Collections of vectors that are maximally far apart from each other are known to attain `equilibria' under the actions of different kinds of forces defined and explained in \cite[Section 4]{benedetto2003finite} and \cite[p.\ 6]{saff1997distributing}. Such collections of vectors are generalized by the ideas of \emph{tight frames} as explained in \cite{benedetto2003finite}; see also \cite{ref:Chr-16,daubechies1986painless, benedetto2003finite, zimmermann2001normalized} for related information.

We recall here some standard definitions for completeness and to provide the necessary substratum for our next result. Let $n,K$ be positive integers such that $K \geq n$. We say that a collection of vectors $\{ x_i \}_{i = 1}^K$ is a \emph{frame} for $\R^n$ if there exist some constants $c, C > 0$ such that
\[
	c \norm{x}^2 \leq \sum_{i = 1}^K \inprod{x_i}{x}^2 \leq C \norm{x}^2 \quad \text{for all $x \in \R^n$.}
\]
We say that a frame $\{x_i\}_{i = 1}^K\subset\R^n$ is \emph{tight} if $c = C$. In addition, if $\{x_i\}_{i = 1}^K\subset\R^n$ is a tight frame and $\norm{x_i} = 1$ for all $i = 1,2,\ldots,K$, we say that the collection $\{x_i\}_{i = 1}^K$ is a $c$-\emph{unit norm tight frame} (a \(c\)-UNTF).

We have the following connection between \(\ell_2\)-optimal dictionaries and UNTFs:
\begin{proposition}
	A dictionary $D_K = \{\dict{i}\}_{i = 1}^K$ is optimal for representing samples of a random vector $V$ that is uniformly distributed over the surface of the unit sphere of \(\R^n\) if and only if the collection \(\{\dict{i}\}_{i=1}^K\) of vectors constitute a $\frac{K}{n}$-UNTF.
\end{proposition}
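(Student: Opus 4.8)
The plan is to reduce the statement to the characterization of optimal dictionaries already obtained in Theorem \ref{DL theorem}, after computing the relevant second-moment matrix explicitly for the uniform distribution on the sphere.

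First I would show that $\Sigma_V \Let \EE_\mu[VV\transp] = \tfrac1n I_n$. Rotational symmetry of $\mu$ gives $\Sigma_V = Q\Sigma_V Q\transp$ for every orthogonal $Q$, hence $\Sigma_V$ commutes with all orthogonal matrices and is therefore a scalar multiple of $I_n$; taking traces and using $\norm{V} = 1$ almost surely pins the scalar to $1/n$. In particular $\Sigma_V$ is positive definite, so $X_V = \R^n$ and Theorem \ref{DL theorem} applies verbatim. Substituting $\Sigma_V^{1/2} = \tfrac1{\sqrt n} I_n$ and $\trace(\Sigma_V^{1/2}) = \sqrt n$ into the formula for $M^*$ there yields $M^* = \tfrac Kn I_n$. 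Thus, by Theorem \ref{DL theorem}, a feasible dictionary $D_K = \{\dict{i}\}_{i=1}^K$ of unit vectors, together with the associated pseudo-inverse scheme, is optimal for \eqref{DL problem} if and only if $\sum_{i=1}^K \dict{i}\dict{i}\transp = \tfrac Kn I_n$.

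Second I would translate this matrix identity into frame language. For every $x\in\R^n$ one has $\sum_{i=1}^K \inprod{\dict{i}}{x}^2 = x\transp\bigl(\sum_{i=1}^K \dict{i}\dict{i}\transp\bigr)x$, so $\sum_{i=1}^K \dict{i}\dict{i}\transp = cI_n$ is equivalent to $\sum_{i=1}^K \inprod{\dict{i}}{x}^2 = c\norm{x}^2$ for all $x$, i.e.\ to $\{\dict{i}\}_{i=1}^K$ being a tight frame with frame constant $c$; since the $\dict{i}$ are unit vectors this is exactly the definition of a $c$-UNTF. Moreover $cI_n$ with $c>0$ is invertible, which forces $\Span\{\dict{i}\}_{i=1}^K = \R^n$, so the span (feasibility) constraint is automatically satisfied and imposes nothing extra. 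Combining the two steps with $c = K/n$ gives the claimed equivalence.

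There is no serious obstacle here; the argument is essentially a specialization of Theorem \ref{DL theorem}. The only points requiring a little care are (i) justifying $\Sigma_V = \tfrac1n I_n$ cleanly via the symmetry argument, and (ii) observing that the span constraint in the definition of a feasible dictionary becomes vacuous once $\sum_{i=1}^K \dict{i}\dict{i}\transp$ is a positive multiple of the identity, so that the set of $\ell_2$-optimal dictionaries and the set of $\tfrac Kn$-UNTFs genuinely coincide.
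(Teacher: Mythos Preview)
Your proposal is correct and follows essentially the same route as the paper: compute $\Sigma_V=\tfrac1n I_n$, invoke Theorem~\ref{DL theorem} to reduce optimality to $\sum_i \dict{i}\dict{i}\transp=\tfrac{K}{n}I_n$, and then identify this condition with the $\tfrac{K}{n}$-UNTF property. The only difference is cosmetic: the paper cites frame-theoretic results from \cite{benedetto2003finite} for the last equivalence, whereas you derive it directly from the quadratic-form identity $\sum_i\inprod{\dict{i}}{x}^2=x\transp\bigl(\sum_i\dict{i}\dict{i}\transp\bigr)x$, which is arguably cleaner and more self-contained.
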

\begin{proof}
	If $V$ is uniformly distributed over the unit sphere, we have $\Sigma_V = \EE[V V\transp] = \frac{1}{n} I_n$. According to Theorem \ref{DL theorem} the collection $\{\dict{i}\}_{i = 1}^K$ is an optimal dictionary if and only if 
\begin{equation}
\label{uniform 1}
\sum_{i = 1}^K \dict{i}\dict{i}^\top = \frac{K}{\trace\bigl(\frac{1}{\sqrt{n}}I_n\bigr)}\Bigl(\frac{1}{\sqrt{n}}I_n\Bigr) = \frac{K}{n}I_n.
\end{equation}
Since the family \(\{\dict{i}\}_{i=1}^K\) must span \(\R^n\) by definition, it is a frame. The \emph{frame operator} for the frame $\{\dict{i}\}_{i = 1}^K$ is given by \cite[Section 2]{benedetto2003finite}
\[
	\R^n\ni y\mapsto S(y) \Let \sum_{i = 1}^K \inprod{\dict{i}}{y} \dict{i} = \biggl(\sum_{i = 1}^K \dict{i}\dict{i}^\top \biggr) y \in\R^n,
\]
where \(\inprod{v}{w} = v\transp w\) is the standard inner product in \(\R^n\). \cite[Theorem 3.1]{benedetto2003finite} asserts that a collection of unit norm vectors $\{\dict{i}\}_{i = 1}^K$ forms a tight frame in $\mathbb{R}^n$ if and only if the collection is a $\frac{K}{n}$-UNTF. From \cite[Theorem 2.1]{benedetto2003finite} it follows that a collection of vectors $\{\dict{i}\}_{i = 1}^K$ is a $\frac{K}{n}$-UNTF if and only if 
\begin{equation}
\label{uniform 2}
	S = \sum_{i = 1}^K  \dict{i}\dict{i}^\top  = \frac{K}{n}I_n.
\end{equation}
The assertion follows from \eqref{uniform 1} and \eqref{uniform 2}.
\end{proof}

%%%%%%%%%%%%%%%%%%%%%%%%%%%%%%%%%%%%%%%%%%%%%%%%%%%%%%%%%%%%%%%%%%%%%%%%%%%%%%%%
\section{A particular class of rank-\(1\) decompositions of matrices}
\label{matrices}
We collect and establish here some results on the theory of rank-\(1\) decompositions of matrices. While these facts will be needed for our main results, they are also of independent interest.

A standard result in matrix theory \cite[p.\ 2]{bhatiapositive} states that a symmetric positive semidefinite matrix with real entries $M \in \possemdef{n}$, can be decomposed as $ Y Y^\top $ for some $ Y \in \mathbb{R}^{n \times r}$, where $r \Let \rank(M)$. Let $y_i$ indicate the $i$ th column of the matrix $Y$. Then the equality \(M = Y Y\transp\) is equivalent to
\[
	M = \sum_{i = 1}^r y_iy_i^\top.
\]
More generally for $K \geq r$, let 
\[
\overline{M} \Let \begin{pmatrix}
	M & O_{n\times (K-r)} \\
	O_{(K-r)\times n} & I_{K-r}
\end{pmatrix},
\]
where $ O $ is a zero matrix of order $n \times (K - r)$. If we consider the decomposition of $\overline{M}$ as $ \overline{M} = \overline{Y}~ \overline{Y}^\top $ with $\overline{Y} \in \mathbb{R}^{(n + K - r) \times K}$, and indicate by $Y$ the upper $ n \times K $ matrix block of $ \overline{Y} $, we get $ M = Y Y^\top $. In other words 
\begin{equation}
\label{decomposition 1}
M = \sum_{i = 1}^K y_iy_i^\top.
\end{equation}
There are numerous ways of decomposing positive semidefinite matrices; some of them are discussed in \cite[Theorem 7.3]{zhangmatrix}. The speciality of a particular decomposition lies in the characteristics exhibited by the vectors $ y_i $'s. A particular rank-\(1\) decomposition which we will use to solve the \( \ell_2 \)-optimal dictionary problem is the one where for every \( M \in \possemdef{n} \) and \( K \geq r \Let \rank(M) \) there exists a collection of vectors \( \{ y_i \}_{i = 1}^K  \subset \R^n \) that satisfy
\begin{equation}
\label{eq:rod-decomp}
M = \sum_{i = 1}^K y_i y_i\transp \quad \text{and} \quad y^\top_i y_i = \frac{\trace(M)}{K} \quad \text{for all $i = 1,\ldots,K$}.
\end{equation}
We are now in a position to present Algorithm \ref{Algo1} and its associated Theorem \ref{theorem 1}, whose corollaries will give us the needed rank-\(1\) decomposition of \eqref{eq:rod-decomp}. We mention that Algorithm \ref{Algo1} is, in principle, similar to Procedure 1 of \cite{sturm2003cones}, and in particular, the assertions of Theorem \ref{theorem 1} and its corollaries can be obtained by applying \cite[Proposition 3 and Corollary 4]{sturm2003cones} via some straightforward modifications. However, we provide the complete proofs here for the sake of completeness.%in the form of Theorem \ref{theorem 1} and Corollaries \ref{ROD theorem}, \ref{theorem 2}.

\begin{algorithm}[h]
	\KwIn{A matrix $\Lambda\in\R^{n\times n}$.}
\KwOut{An orthonormal collection of vectors $ (x_i)_{i = 1}^n \subset \mathbb{R}^n$ such that $x_i^\top \Lambda x_i  = \frac{\trace(\Lambda)}{n}$ for all $i = 1, \ldots,n$.}
\nl Initialize quantities by $S_0 = \emptyset $, $i = 1$. 

\nl \textbf{for} $i$ from 1 to $(n - 1)$ 

\nl \textbf{do}

\quad $ S'_i = S_{i-1} \uplus (e_1,e_2,\ldots ,e_n) $.

\quad $P_i = \Ortho(S'_i) \setminus S_{i-1} $.

\quad Find $p_j,p_k \in P_i$ such that $p_j^\top \Lambda p_j \leq \frac{\trace(\Lambda)}{n} \leq p_k^\top \Lambda p_k$. 

\quad Let $\Theta \in [0,1]$ be a solution of the equation (in \(\theta\))
\[
	\bigl((1 - \theta)p_j + \theta p_k\bigr)^\top \Lambda \bigl((1 - \theta)p_j + \theta p_k\bigr) = \frac{\trace(\Lambda)}{n} \bigl((1 - \theta)^2 + \theta^2\bigr)
\]

%\quad $\bigl((1 - \theta)p_j + \theta p_k\bigr)^\top \Lambda \bigl((1 - \theta)p_j + \theta p_k\bigr) = \frac{\trace(\Lambda)}{n} \left((1 - \theta)^2 + \theta^2\right)$. 

\quad Define $x_i \Let \frac{(1 - \Theta)p_j + \Theta p_k }{\left( (1 - \Theta)^2 + \Theta^2\right)^{1/2} }$. 

\quad Define $S_{i} \Let S_{i-1} \uplus (x_i)$. 

\nl \textbf{end for loop}

\nl $ S'_n = S_{n-1} \uplus (e_1,e_2,\ldots ,e_n ) $.

\nl Output $S_{n} \Let \Ortho(S'_n) $.
\caption{Calculation of orthonormal bases \`a la Theorem \ref{theorem 1}}
\label{Algo1}
\end{algorithm}

\begin{theorem}\label{theorem 1}
	For any matrix $\Lambda \in \mathbb{R}^{n \times n}$ there exists an orthonormal collection \( (x_i )_{i=1}^n\subset\R^n\) of vectors satisfying
\[
	x_i^\top \Lambda x_i = \frac{\trace(\Lambda)}{n} \\ \quad \;\; \text{for all}\;\; i = 1, \ldots,n,
\]
Moreover, such a collection can be obtained from Algorithm \ref{Algo1}.
\end{theorem}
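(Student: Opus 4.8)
The plan is to prove the theorem by showing that Algorithm \ref{Algo1} always terminates and returns a collection with the advertised properties; the engine is an induction on the loop counter, powered by a simple averaging identity, a pigeonhole step, and the intermediate value theorem. Since $x^\top \Lambda x$ depends only on the symmetric part $\tfrac12(\Lambda + \Lambda^\top)$ and $\trace$ is unaffected by that passage, one may freely picture $\Lambda$ as symmetric, although the argument does not actually need this.

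First I would record the elementary but crucial identity: if $(z_1,\ldots,z_\ell)$ is an orthonormal basis of a subspace $W\subset\R^n$ and $\Pi_W$ is the orthogonal projection onto $W$, then $\sum_{i=1}^\ell z_i^\top\Lambda z_i = \trace(\Pi_W\Lambda\Pi_W)$ depends only on $W$ and not on the chosen basis; in particular, for $W=\R^n$ the sum is $\trace(\Lambda)$. Next I would prove by induction on $i$ that at the end of the $i$-th pass of the for-loop the tuple $S_i = (x_1,\ldots,x_i)$ is orthonormal and satisfies $x_\ell^\top\Lambda x_\ell = \trace(\Lambda)/n$ for $\ell=1,\ldots,i$. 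The base case is vacuous. For the inductive step, observe that because $S_{i-1}$ is already orthonormal, Gram--Schmidt applied to $S_i' = S_{i-1}\uplus(e_1,\ldots,e_n)$ fixes $S_{i-1}$ and then completes it, using the standard basis vectors (discarding dependent ones in the usual way), to an orthonormal basis of $\R^n$; hence $P_i = \Ortho(S_i')\setminus S_{i-1}$ is an orthonormal basis of the $(n-i+1)$-dimensional subspace $W_i \Let \Span(x_1,\ldots,x_{i-1})^\perp$. By the averaging identity and the induction hypothesis, $\sum_{p\in P_i} p^\top\Lambda p = \trace(\Lambda) - (i-1)\tfrac{\trace(\Lambda)}{n} = (n-i+1)\tfrac{\trace(\Lambda)}{n}$, so the arithmetic mean of the numbers $\{p^\top\Lambda p : p\in P_i\}$ is exactly $\trace(\Lambda)/n$; consequently there exist $p_j,p_k\in P_i$ with $p_j^\top\Lambda p_j \le \trace(\Lambda)/n \le p_k^\top\Lambda p_k$ (if one of these is already an equality, take $x_i$ to be that vector and move on).

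Since $p_j\perp p_k$, the map
\[
	[0,1]\ni\theta\longmapsto g(\theta)\Let\frac{\bigl((1-\theta)p_j+\theta p_k\bigr)^\top\Lambda\bigl((1-\theta)p_j+\theta p_k\bigr)}{(1-\theta)^2+\theta^2}
\]
is continuous, the denominator being bounded below by $1/2$ on $[0,1]$, with $g(0)=p_j^\top\Lambda p_j\le\trace(\Lambda)/n$ and $g(1)=p_k^\top\Lambda p_k\ge\trace(\Lambda)/n$; the intermediate value theorem then furnishes a $\Theta\in[0,1]$ with $g(\Theta)=\trace(\Lambda)/n$, which is precisely the scalar equation solved in the algorithm. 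The vector $x_i$ defined there is $\bigl((1-\Theta)p_j+\Theta p_k\bigr)$ rescaled to unit length, hence a unit vector lying in $W_i$, so $S_i = S_{i-1}\uplus(x_i)$ is orthonormal; and $x_i^\top\Lambda x_i = g(\Theta) = \trace(\Lambda)/n$, which closes the induction.

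Finally I would dispose of the last vector. After the loop, $S_{n-1}=(x_1,\ldots,x_{n-1})$ is an orthonormal system with the desired quadratic values, and $S_n = \Ortho(S_{n-1}\uplus(e_1,\ldots,e_n))$ merely appends one more unit vector $x_n$ orthogonal to all the rest, so $S_n$ is an orthonormal basis of $\R^n$; the averaging identity gives $\sum_{i=1}^n x_i^\top\Lambda x_i = \trace(\Lambda)$, and subtracting the $n-1$ known terms $\tfrac{\trace(\Lambda)}{n}$ forces $x_n^\top\Lambda x_n = \trace(\Lambda)/n$ as well. This proves existence and simultaneously verifies the algorithm. The only spot calling for genuine care is the claim that the Gram--Schmidt step honestly produces an orthonormal basis of the orthogonal complement $W_i$ — that prepending the already-selected $x_\ell$'s to the full standard basis and orthonormalizing yields, after deleting the $x_\ell$'s, exactly a basis of $W_i$ — together with the attendant dimension bookkeeping; once that is nailed down, the pigeonhole-plus-IVT core is routine.
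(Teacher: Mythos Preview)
Your proposal is correct and follows essentially the same route as the paper's proof: induction on the loop index, the averaging identity $\sum p^\top\Lambda p=\trace(\Lambda)-\sum x_\ell^\top\Lambda x_\ell$ to locate $p_j,p_k$ straddling $\trace(\Lambda)/n$, the intermediate value theorem to solve for $\Theta$, and the final vector handled by subtraction. The only cosmetic differences are that you start the induction at the vacuous $S_0=\emptyset$ rather than working out $i=1$ separately, and you isolate the basis-independence of $\sum z_i^\top\Lambda z_i$ as a preliminary identity; both are minor presentational choices.
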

\begin{proof}
First we establish that the collection of vectors $ ( x_i )_{i = 1}^{n - 1}$ contained in $S_{n - 1}$ (recall that \(S_{n-1}\) is generated in the \textbf{for} loop in the Algorithm \ref{Algo1},) are orthonormal, and satisfy $x_i^\top \Lambda x_i = \frac{\trace(\Lambda)}{n}$ for $i = 1,\ldots, n - 1$. We shall prove this by induction on $i$.

\textsf{The induction base:} For $i = 1$, we have $P_1 = (e_1,e_2,\ldots ,e_n) $. Since $\sum_{m = 1}^n e_m^\top \Lambda e_m = \trace(\Lambda)$, vectors $p_j,p_k \in P_1$ exist such that $p_j^\top \Lambda p_j \leq  \frac{\trace(\Lambda)}{n} \leq p_k^\top \Lambda p_k$. We solve for $\theta$ in the equation
\begin{equation}
\label{theorem 1 eq1}
g_{p_j ; p_k}(\theta) \Let \frac{\left((1 - \theta)p_j + \theta p_k)^\top \Lambda ((1 - \theta)p_j + \theta p_k \right)}{\left( (1 - \theta)^2 + \theta^2 \right)}
 = \frac{\trace(\Lambda)}{n}.
\end{equation}
We know that a solution exists in $[0,1]$ because for $\theta = 0$ we have
\[
g_{p_j ; p_k}(0) = \left[ \frac{((1 - \theta)p_j + \theta p_k)^\top \Lambda ((1 - \theta)p_j + \theta p_k)}{\left((1 - \theta)^2 + \theta^2\right)} \right]_{\theta = 0}
 = p_j^\top \Lambda p_j \leq \frac{\trace(\Lambda)}{n},
\]
for $\theta = 1$ we have
\[
g_{p_j ; p_k}(1) = \left[ \frac{((1 - \theta)p_j + \theta p_k)^\top \Lambda ((1 - \theta)p_j + \theta p_k)}{\left((1 - \theta)^2 + \theta^2\right)} \right]_{\theta = 1}
 = p_k^\top \Lambda p_k \geq \frac{\trace(\Lambda)}{n},
\]
and $g_{p_j ; p_k}(\cdot)$ is a continuous function of $\theta$. Let $\Theta$ be such a solution. Then, following the notation in Algorithm \ref{Algo1}, we have
\[
x_1 \Let \frac{(1 - \Theta)p_j + \Theta p_k }{\sqrt{(1 - \Theta)^2 + \Theta^2}}.
\]
Since $p_j,p_k$ are elements of $P_1$, they are orthonormal; therefore,
\[
\norm{x_1} = \frac{\sqrt{(1 - \Theta)^2 \norm{p_j}^2 + \Theta^2 \norm{p_k}^2}}{\sqrt{(1 - \Theta)^2 + \Theta^2}} = 1,
\]
and since $\Theta$ is a solution of equation \eqref{theorem 1 eq1} we have
\[
x_1^\top \Lambda x_1 = \frac{\trace(\Lambda)}{n}.
\]

\textsf{Induction hypothesis:} Assume that for some $i$ between \(1\) and \(n-1\) the collection $S_i = ( x_\ell )_{\ell = 1}^i$ is orthonormal, and satisfies 
\[
x_\ell^\top \Lambda x_\ell = \frac{\trace(\Lambda)}{n} \;\;  \text{ for all $\ell = 1,\ldots,i$.}
\]

\textsf{Induction step:} In view of the induction hypothesis, we define
\begin{align*}
	S'_{i + 1}	& \Let S_{i} \uplus ( e_1,e_2,\ldots ,e_n ) = ( x_1,x_2,\ldots x_i,e_1,e_2,\ldots,e_n ),\\
	\intertext{and compute}
	\Ortho(S'_{i + 1}) & = ( x_1,x_2,\ldots,x_i,p_1,p_2,\ldots,p_{n-i} ),\\
	P_{i+1} & = ( p_1,p_2,\ldots,p_{n-i} )
\end{align*}
as in Algorithm \ref{Algo1}. Since the collection $ ( x_\ell )_{\ell = 1}^i \uplus ( p_\ell )_{\ell = 1}^{n -i}$ is an orthonormal basis for $\mathbb{R}^n$, we have
\[
\sum_{\ell = 1}^i x_\ell^\top \Lambda x_\ell + \sum_{\ell = 1}^{n -i} p_\ell^\top \Lambda p_\ell = \trace(\Lambda),
\]
leading to
\[
\sum_{\ell = 1}^{n - i} p_\ell^\top \Lambda p_\ell = \frac{(n - i)}{n}\trace(\Lambda).
\]
Thus, there exist vectors $p_j,p_k \in P_{i + 1}$ such that $p_j^\top \Lambda p_j \leq  \frac{\trace(\Lambda)}{n} \leq p_k^\top \Lambda p_k$.  Let us consider the equation
\begin{equation}
\label{theorem 1 eq2}
g_{p_j,p_k}(\theta) \Let \frac{((1 - \theta)p_j + \theta p_k)^\top \Lambda ((1 - \theta)p_j + \theta p_k)}{\left((1 - \theta)^2 + \theta^2\right)}
 = \frac{\trace(\Lambda)}{n}
\end{equation}
in \(\theta\). From arguments given in the case of $i = 1$, we know that a solution $\Theta$ of \eqref{theorem 1 eq2} exists on $[0,1]$. We define
\[
x_{i + 1} \Let \frac{(1 - \Theta)p_j + \Theta p_k }{\sqrt{(1 - \Theta)^2 + \Theta^2}}.
\]
Since $p_j,p_k$ are orthogonal to the vectors $ ( x_\ell )_{\ell = 1}^{i}$, so is any linear combination of $p_j, p_k$. Therefore, $x_{i + 1}$ is orthogonal to the vectors $( x_\ell )_{\ell = 1}^{i}$, which, along with the fact that
\[
\norm{x_{i + 1}} = \frac{\sqrt{(1 - \Theta)^2 \norm{p_j}^2 + \Theta^2 \norm{p_k}^2}}{\sqrt{(1 - \Theta)^2 + \Theta^2}} = 1,
\]
makes the collection $( x_\ell )_{\ell = 1}^{i + 1}$ orthonormal. Also, since $\Theta$ is a solution of \eqref{theorem 1 eq2}, we get 
\[
	x_{i + 1}^\top \Lambda x_{i + 1} = \frac{\trace(\Lambda)}{n}.
\]
Therefore, by mathematical induction, we conclude that the collection $ ( x_i )_{i = 1}^{n - 1}$ contained in $S_{n - 1}$ has the required properties.

Finally, in the 4th and 5th steps of Algorithm \ref{Algo1}, we get
\[
	S'_n = ( x_1,x_2,\ldots, x_{n - 1},e_1,e_2,\ldots,e_n ),
\]
and
\[
	\Ortho(S'_n) = ( x_1,x_2,\ldots, x_{n - 1},x_n ).
\]
By construction, $( x_\ell )_{\ell = 1}^{n}$ is an orthonormal collection, implying that $\sum_{i = 1}^n x_i^\top \Lambda x_i = \trace(\Lambda)$. In turn, this leads to
\[
\begin{aligned}
x_n^\top \Lambda x_n &= \sum_{i = 1}^n x_i^\top \Lambda x_i - \sum_{i = 1}^{n - 1} x_i^\top \Lambda x_i \\
&= \trace(\Lambda) - \Bigl(\frac{n -1}{n}\Bigr)\trace(\Lambda) \\
&= \frac{\trace(\Lambda)}{n}.
\end{aligned}
\]
Thus, Algorithm \ref{Algo1} yields a collection of orthonormal vectors $ ( x_i )_{i = 1}^n $ such that 
\[
	x_i^\top \Lambda x_i = \frac{\trace(\Lambda)}{n} \quad \text{for all $i = 1,2,\ldots,n$},
\]
thereby completing the proof.
\end{proof}

\begin{corollary}[Rank-\(1\) decomposition]\label{ROD theorem}
	Let $X \in \possemdef{n}$, define \(r\Let \rank(X)\), and let $ T \in \mathbb{S}^{n \times n}$. There exists a collection of vectors $\{x_i\}_{i = 1}^r \subset \mathbb{R}^n$ such that
	\[
		X = \sum_{j = 1}^r x_jx_j^\top, \quad\text{and}\quad x_i^\top T x_i = \frac{1}{r}\trace(XT) \,\text{ for all } \,i = 1,\ldots,r.
	\]
\end{corollary}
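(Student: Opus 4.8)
The plan is to reduce the statement to Theorem~\ref{theorem 1} by passing through a low-rank factorization of $X$. First I would invoke the standard factorization recalled at the beginning of this section: since $X\in\possemdef{n}$ has rank $r$, write $X = YY\transp$ with $Y\in\R^{n\times r}$ a matrix of \emph{full column rank} whose columns span $\image(X)$. Next I would form the symmetric matrix $\Lambda \Let Y\transp T Y \in \mathbb{S}^{r\times r}$ and apply Theorem~\ref{theorem 1} to it: this produces an orthonormal collection $(z_i)_{i=1}^r \subset \R^r$ satisfying $z_i\transp \Lambda z_i = \trace(\Lambda)/r$ for every $i$. Orthonormality of $(z_i)_{i=1}^r$ in $\R^r$ is equivalent to the identity $\sum_{i=1}^r z_i z_i\transp = I_r$.

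I would then define $x_i \Let Y z_i \in \R^n$ for $i=1,\ldots,r$. The decomposition identity follows at once:
\[
	\sum_{i = 1}^r x_i x_i\transp = Y\Bigl(\sum_{i = 1}^r z_i z_i\transp\Bigr) Y\transp = Y I_r Y\transp = YY\transp = X.
\]
For the second property, I would compute $x_i\transp T x_i = z_i\transp (Y\transp T Y) z_i = z_i\transp \Lambda z_i = \trace(\Lambda)/r$, and then use cyclic invariance of the trace to rewrite $\trace(\Lambda) = \trace(Y\transp T Y) = \trace(T Y Y\transp) = \trace(T X) = \trace(X T)$, which yields $x_i\transp T x_i = \trace(XT)/r$ as required.

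There is essentially no genuine obstacle here beyond bookkeeping; the heavy lifting --- the actual construction of an orthonormal basis diagonalizing the trace constraint --- is delegated entirely to Theorem~\ref{theorem 1}. The one point deserving a modicum of care is that the factorization $X = YY\transp$ must use a $Y$ with \emph{exactly} $r = \rank(X)$ columns (equivalently, full column rank), so that the $r$ orthonormal vectors $z_i$ furnished by Theorem~\ref{theorem 1} indeed satisfy $\sum_i z_i z_i\transp = I_r$; with a factorization having more columns this identity would fail and the displayed reconstruction of $X$ would break down.
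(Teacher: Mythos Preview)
Your proposal is correct and follows essentially the same route as the paper's own proof: factor $X = CC\transp$ with $C\in\R^{n\times r}$, set $\Lambda = C\transp T C$, apply Theorem~\ref{theorem 1} to obtain orthonormal $(y_i)_{i=1}^r$, and then push forward via $x_i = Cy_i$. The only differences are notational ($Y$ versus $C$, $z_i$ versus $y_i$) together with your explicit remark that the factor must have exactly $r$ columns so that $\sum_i z_i z_i\transp = I_r$; the paper leaves this implicit.
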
 
\begin{proof}
We know \cite[p.\ 2]{bhatiapositive} that any symmetric positive semidefinite matrix $X$ with real entries and of rank $r$ can be decomposed as $CC^\top $ where $C \in \mathbb{R}^{n \times r}$. Let us define $\Lambda \in \mathbb{R}^{r \times r}$ as $\Lambda \Let C^\top T C$. According to Theorem \ref{theorem 1} a collection of orthonormal vectors $\{y_i\}_{i = 1}^r \subset \mathbb{R}^r$ can be obtained such that
\[
y_i^\top C^\top T \;C y_i = y_i^\top \Lambda y_i = \frac{\trace(\Lambda)}{r}.
\]
We define a collection $\{x_i\}_{i = 1}^r\subset\R^r$ by $x_i \Let C y_i$ for $i = 1,\ldots,r$. Then
\[
	\sum_{i = 1}^r x_ix_i^\top = C \biggl(\sum_{i = 1}^r y_i y_i^\top\biggr) C^\top = C I_r C^\top = X.
\]
Moreover, for every $i = 1,\ldots,r$,
\[
	x_i^\top T x_i = y_i^\top C^\top T C y_i = \frac{\trace(\Lambda)}{r} = \frac{1}{r}\trace(C^\top T C) = \frac{1}{r}\trace(X T).
\]
The assertion follows.
\end{proof}

Corollary \ref{ROD theorem} is generalized slightly by the following one; we shall employ this particular form to solve the $\ell_2$-optimal dictionary problem in Theorem \ref{DL theorem}.

\begin{corollary}
\label{theorem 2}
Let $ M \in \possemdef{n}$ and define \(r \Let \rank(M)\). Let $A \in \mathbb{S}^{n \times n}$ and $K \geq r$ be given. There exists a collection of vectors $\{y_i\}_{i = 1}^K \subset \mathbb{R}^n$ such that
\begin{equation}
	\label{decomposition}
	M = \sum_{j = 1}^K y_jy_j^\top, \quad\text{and}\quad y_i^\top A y_i = \frac{1}{K}\trace(MA) \,\text{ for all }\, i = 1,\ldots,K.
\end{equation}
\end{corollary}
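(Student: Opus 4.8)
The plan is to reduce the statement about a $K$-term decomposition of $M \in \possemdef{n}$ to the exact-rank decomposition already established in Corollary \ref{ROD theorem}, using the standard padding trick employed earlier in this section (cf.\ the construction of $\overline{M}$ preceding \eqref{decomposition 1}). First I would embed the $n\times n$ data into dimension $n + (K-r)$: set
\[
	\overline{M} \Let \begin{pmatrix} M & O_{n\times(K-r)} \\ O_{(K-r)\times n} & I_{K-r} \end{pmatrix}
	\quad\text{and}\quad
	\overline{A} \Let \begin{pmatrix} A & O_{n\times(K-r)} \\ O_{(K-r)\times n} & \frac{\trace(MA)}{\trace(M)}\, I_{K-r} \end{pmatrix},
\]
where the scalar $\frac{\trace(MA)}{\trace(M)}$ is chosen precisely so that the per-vector value $\frac1K\trace(\overline{M}\,\overline{A})$ that Corollary \ref{ROD theorem} will produce agrees with the target $\frac1K\trace(MA)$; note $\trace(\overline{M}\,\overline{A}) = \trace(MA) + (K-r)\cdot\frac{\trace(MA)}{\trace(M)} = \trace(MA)\bigl(1 + \frac{K-r}{\trace(M)}\bigr)$, so I would in fact tune the lower-right block to make $\trace(\overline M \overline A) = K \cdot \frac{1}{K}\trace(MA)\cdot\frac{\rank(\overline M)}{K}$ work out — the clean choice is to scale the identity block by a constant $c$ with $\trace(MA) + (K-r)c = \frac{K}{r+(K-r)}\cdot(\text{something})$; I will pin down the exact constant in the write-up, the point being that one scalar degree of freedom in the lower block suffices. (A cleaner route avoiding this bookkeeping: if $M$ is positive \emph{definite}, then $r=n$ and we may instead just pad $M$ by the identity and pad $A$ by $\frac{\trace(MA)}{n}I_{K-n}$, since then $\frac1K\trace(\overline M\overline A) = \frac1K(\trace(MA) + (K-n)\frac{\trace(MA)}{n}) = \frac1K\cdot\frac{K}{n}\trace(MA)\neq\frac1K\trace(MA)$ in general — so again a scaling constant must be solved for.)

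The key steps, in order, are: (i) form $\overline M \in \possemdef{n+K-r}$ with $\rank(\overline M) = K$; (ii) choose the symmetric matrix $\overline A \in \mathbb{S}^{(n+K-r)\times(n+K-r)}$ of the block-diagonal form above with its lower-right scalar block selected so that $\frac1K\trace(\overline M\,\overline A) = \frac1K\trace(MA)$ — this is a single linear equation in one unknown and is always solvable; (iii) apply Corollary \ref{ROD theorem} to the pair $(\overline M, \overline A)$ with $\rank(\overline M) = K$, obtaining $\{\overline y_i\}_{i=1}^K \subset \R^{n+K-r}$ with $\overline M = \sum_{i=1}^K \overline y_i\overline y_i^\top$ and $\overline y_i^\top \overline A\,\overline y_i = \frac1K\trace(\overline M\,\overline A) = \frac1K\trace(MA)$; (iv) let $y_i \in \R^n$ be the top $n$ coordinates of $\overline y_i$. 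Reading off the top-left $n\times n$ block of the identity $\overline M = \sum_i \overline y_i\overline y_i^\top$ gives $M = \sum_{i=1}^K y_iy_i^\top$ exactly as in the derivation of \eqref{decomposition 1}; and since $\overline A$ is block diagonal, $\overline y_i^\top \overline A\,\overline y_i = y_i^\top A y_i + c\,\norm{w_i}^2$ where $w_i$ is the bottom $(K-r)$-block of $\overline y_i$, so I must also control $\sum_i \norm{w_i}^2 = \trace(I_{K-r}) = K-r$ from the lower-right block of $\overline M = \sum_i\overline y_i\overline y_i^\top$. This last observation means the per-vector value does \emph{not} split vector-by-vector unless the $w_i$ are equinormed; hence the construction needs one more turn of the screw.

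The genuinely delicate point — and what I expect to be the main obstacle — is exactly this: Corollary \ref{ROD theorem} gives a uniform value of $\overline y_i^\top\overline A\,\overline y_i$ across $i$, but to conclude the uniform value of $y_i^\top A y_i$ across $i$ I need the cross-terms $c\norm{w_i}^2$ to themselves be uniform in $i$. The fix is to apply Corollary \ref{ROD theorem} not once but in a way that simultaneously controls a second quadratic form: more precisely, I would first use Corollary \ref{ROD theorem} (or a direct Gram–Schmidt argument in the spirit of Theorem \ref{theorem 1}) on the padded pair to obtain $\{\overline y_i\}$ with $\overline y_i^\top \overline B\,\overline y_i$ \emph{also} constant, where $\overline B = \diag(O_{n\times n}, I_{K-r})$; constancy of $\overline y_i^\top \overline B\,\overline y_i = \norm{w_i}^2$ together with $\sum_i\norm{w_i}^2 = K-r$ forces $\norm{w_i}^2 = \frac{K-r}{K}$ for every $i$, and then $y_i^\top A y_i = \overline y_i^\top\overline A\,\overline y_i - c\cdot\frac{K-r}{K}$ is indeed independent of $i$, and with $\overline A$'s scalar block chosen as $c = \frac{\trace(MA)}{\trace(M)}$ (so that $\overline A = \frac{\trace(MA)}{\trace(M)}\cdot\frac{\overline M - \diag(M,O)}{1}$-type bookkeeping works) the constant comes out to exactly $\frac1K\trace(MA)$. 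Achieving the simultaneous diagonalization of the \emph{three} quadratic forms $\overline M$-trace, $\overline A$, and $\overline B$ over the decomposing vectors is the crux; but since $\overline B$ is a coordinate projection and $\overline A$ differs from a linear combination of $\overline B$ and $\diag(A,O)$ by nothing new, it reduces to running the Algorithm \ref{Algo1}–style construction once with the single matrix $\diag(A,O) + \lambda\,\overline B$ for a suitable $\lambda$, so that one application of Theorem \ref{theorem 1} / Corollary \ref{ROD theorem} delivers everything at once. The remaining steps — verifying the block identities and chasing the constant — are routine.
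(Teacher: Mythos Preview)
Your padding strategy is the right idea, but you have created an artificial obstacle by putting a \emph{nonzero} block in the lower-right corner of $\overline A$. The paper's proof simply takes
\[
	T \Let \begin{pmatrix} A & O_{n\times(K-r)} \\ O_{(K-r)\times n} & O_{(K-r)\times(K-r)} \end{pmatrix},
\]
i.e., the lower-right block is the \emph{zero} matrix. With this choice, one application of Corollary \ref{ROD theorem} to the pair $(X,T)$ (your $\overline M$, this $T$) yields $\{x_i\}_{i=1}^K$ with $x_i^\top T x_i = \tfrac1K\trace(XT)$; but since the lower block of $T$ vanishes, $x_i^\top T x_i = y_i^\top A y_i$ \emph{exactly}, with no $\norm{w_i}^2$ term, and $\trace(XT) = \trace(MA)$ \emph{exactly}. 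The proof is then three lines.

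Your ``delicate point'' about needing the $w_i$ to be equinormed is entirely a consequence of your choice $c \neq 0$, and your proposed remedy does not work as stated: applying Corollary \ref{ROD theorem} once with $\diag(A,O) + \lambda\,\overline B$ makes only the \emph{single} quadratic form $y_i^\top A y_i + \lambda\norm{w_i}^2$ constant in $i$; it does not force the two summands to be separately constant. So either you genuinely need a two-quadratic-form version of Theorem \ref{theorem 1} (which is not proved in the paper), or you take $\lambda = 0$ --- which is precisely the paper's choice of $T$ and dissolves the difficulty entirely. Incidentally, your scalar $c = \trace(MA)/\trace(M)$ is also undefined when $M = 0$, a case the paper's choice handles without comment.
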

\begin{proof}
Let us consider the square matrices $X,T$ of order $K+n-r$ in Corollary \ref{ROD theorem} to be 
\[
	X \Let \begin{pmatrix}
		M & O_{n\times(K-r)} \\
		O_{(K-r)\times n} & I_{K-r}
	\end{pmatrix}
	\quad\text{and}\quad
	T \Let \begin{pmatrix}
		A & O_{n\times(K-r)} \\
		O_{(K-r)\times n} & O_{(K-r)\times (K-r)}
	\end{pmatrix}.
\]
Then $\rank(X) = K$ by construction. Therefore, vectors $\{x_i\}_{i = 1}^K \subset \mathbb{R}^{n + K - r}$ exist satisfying the properties in Corollary \ref{ROD theorem}. Let us denote $\mathbb{R}^n \ni y_i \Let \pmatr{x_{i1}& \ldots & x_{in}}^\top$ for $i = 1,\ldots, K$; in other words, $y_i$ is the vector formed by the first $n$ components of $x_i$. Then
\[
	\sum_{i = 1}^K y_iy_i^\top = M,
\]
and for any $i = 1, \ldots, K$,
\[
	y_i^\top A y_i = x_i^\top T x_i = \frac{1}{K} \trace(XT) = \frac{1}{K} \trace(MA).
\]
The assertion follows at once.
\end{proof}

%%%%%%%%%%%%%%%%%%%%%%%%%%%%%%%%%%%%%%%%%%%%%%%%%%%%%%%%%%%%%%%%%%%%%%%%%%%%%%%%
\section{Proofs of Theorem \ref{DL theorem}, Lemma \ref{lemma 2}, and Theorem \ref{DL general theorem}}
\label{Proofs}

%===============================================================================
\subsection{Proof of Theorem \ref{DL theorem}}
\label{sec:proof-of-DL-theorem}
\begin{proof}
For a given dictionary $D_K \in \mathcal{D}_K$ of vectors $\{ \dict{i} \}_{i = 1}^K$ that is feasible for \eqref{DL problem}, let us define a scheme of representation
\[
	\R^n\ni v\mapsto f^*_{D_K}(v) \Let \pmat{ \dict{1} }{ \dict{2} }{ \dict{K} }^+ v\in\R^K.
\]
Quite clearly, $\pmat{ \dict{1} }{ \dict{2} }{ \dict{K} } f^*_{D_K}(v) = v $ for any $v \in \mathbb{R}^n$ by the definition of the pseudo-inverse because if $\Span\{\dict{i}\}_{i = 1}^K = \mathbb{R}^n$, then 
 $\pmat{ \dict{1} }{ \dict{2} }{ \dict{K} }^+ v$ solves the equation $\pmat{ \dict{1} }{ \dict{2} }{ \dict{K} }x = v$. Therefore,
\[
	\pmat{ \dict{1} }{ \dict{2} }{ \dict{K} } f^*_{D_K}(V) = V \text{\ \ \(\mu\)-almost surely}.
\]
We know that $f^*_{D_K}(v) = \pmat{ \dict{1} }{ \dict{2} }{ \dict{K} }^+ v$ is the solution of the least squares problem
\[
\begin{aligned}
	& \minimize_{x\in\R^K}	&& \norm{x}^2\\
	& \sbjto				&& \pmat{ \dict{1} }{ \dict{2} }{ \dict{K} } x = v.
\end{aligned}
\]                
Therefore, for an arbitrary $f \in \mathcal{F}$ such that $\pmat{ \dict{1} }{ \dict{2} }{ \dict{K} } f(v) = v $ for all $v \in \mathbb{R}^n $, we must have
\begin{align*}
\norm{f^*_{D_K}(v)}^2 & \leq \norm{f(v)}^2 \quad\text{for all } v \in \mathbb{R}^n.
\end{align*}
Therefore,
\[
	\norm{f^*_{D_K}(V)}^2 \leq \norm{f(V)}^2 \text{\ \ \(\mu\)-almost surely},
\]
and hence,
\[
	\EE_\mu \bigl[ \norm{f^*_{D_K}(V)}^2 \bigr] \leq \EE_\mu \bigl[ \norm{f(V)}^2 \bigr].
\]
Minimizing over all feasible dictionaries and schemes, we get
\begin{equation}
	\label{first comparison}
	\inf_{D_K \in \mathcal{D_K}} \EE_\mu \bigl[ \norm{f^*_{D_K}(V)}^2 \bigr] \leq \inf_{\substack{D_K\in\mathcal{D}_K,\\ f\in\mathcal F}} \EE_\mu \bigl[ \norm{f(V)}^2 \bigr] \\
\end{equation}
The problem on the left-hand side of the inequality \eqref{first comparison} is
\begin{equation}
	\label{first DL}
	\begin{aligned}
		& \minimize_{\{\dict{i}\}_{i=1}^K} && \EE_\mu \bigl[ \norm{f_{D_K}^*(V)}^2 \bigr] \\
		& \sbjto		&&
		\begin{cases}
			\norm{ \dict{i} } = 1 \text{ for all $i = 1,\ldots,K$,}\\
			\Span\{ \dict{i} \}_{i = 1}^K = \mathbb{R}^n. 
		\end{cases}
	\end{aligned}
\end{equation}
From \eqref{first comparison} we can conclude that the optimal value, if it exists, of problem \eqref{DL problem} is bounded below by the optimal value, if it exists, of the one given in \eqref{first DL}. Our strategy is to demonstrate that optimization problem \eqref{first DL} admits a solution, and we shall furnish a feasible solution of \eqref{DL problem} that achieves a value of the objective function that is equal to the optimal value of the problem \eqref{first DL}. This will solve \eqref{DL problem}.

Let $D \Let \pmat{\dict{1}}{\dict{2}}{\dict{K}}$. The objective function in \eqref{first DL} can be computed as
\begin{align*}
\EE_\mu \bigl[ \norm{f_{D_K}^*(V)}^2 \bigr] &= \EE_\mu \bigl[ \norm{D^+V}^2 \bigr] \\
&= \EE_\mu \bigl[ V^\top (D^+)^\top D^+V \bigr] \\
&= \EE_\mu \bigl[  V^\top \bigl( D^\top (DD^\top )^{-1}\bigr)^\top \bigl( D^\top (DD^\top )^{-1}\bigr)V\bigr] \\
&= \EE_\mu \bigl[ V^\top (DD^\top )^{-1}DD^\top (DD^\top )^{-1}V \bigr] \\
&= \EE_\mu \bigl[ V^\top (DD^\top )^{-1}V \bigr] \\
&= \EE_\mu \bigl[ \trace(V^\top (DD^\top )^{-1}V) \bigr] \\
&= \EE_\mu \bigl[ \trace(VV^\top (DD^\top )^{-1}) \bigr] \\
&= \trace \left(\EE_\mu \bigl[ VV^\top \bigr](DD^\top )^{-1}\right).
\end{align*} 
Letting $ \Sigma_V \Let \EE_\mu \bigl[VV^\top \bigr] $ and writing $ DD^\top = \sum_{i = 1}^K \dict{i}  \dict{i}^\top $ the optimization problem \eqref{first DL} is rephrased as
\begin{equation}
	\label{DL 1}
	\begin{aligned}
		& \minimize_{\{\dict{i}\}_{i=1}^K}		&& \trace\biggl( \Sigma_V \biggl(\sum_{i = 1}^K  \dict{i}  \dict{i}^\top \biggr)^{-1} \biggr)\\
		& \sbjto		&&
		\begin{cases}
			\norm{ \dict{i} } = 1 \text{ for all $i = 1, \ldots, K$,} \\
			\Span\{ \dict{i} \}_{i = 1}^K = \mathbb{R}^n.
		\end{cases}
	\end{aligned}
\end{equation}
Let $S$ be the feasible set for the problem in \eqref{DL 1}. At first \eqref{DL 1} appears to be non-convex. Let us demonstrate that the objective function of \eqref{DL 1} is convex in \(DD\transp\). We know that whenever $\Sigma_V$ is a positive definite matrix, 
\[
	\trace(\Sigma_V M^{-1}) = \trace\Bigl(\Sigma_V^{1/2}M^{-1}\Sigma_V^{1/2}\Bigr) = \trace\Bigl(\bigl(\Sigma_V^{-1/2}M\Sigma_V^{-1/2}\bigr)^{-1}\Bigr).
\]
From \cite[p.\ 113 and Exercise V.1.15, p.\ 117]{bhatiamatrix} we know that inversion of a matrix is a \emph{matrix convex} map on the set of positive definite matrices. Therefore, for any $\theta \in [0,1]$ and $M_1,M_2 \in \posdef{n}$ we have
\begin{multline}
\left( \Sigma_V^{-1/2} \bigl( (1-\theta) M_1 + \theta M_2 \bigr) \Sigma_V^{-1/2} \right)^{-1}\\
= \left( (1-\theta) \left( \Sigma_V^{-1/2} M_1 \Sigma_V^{-1/2} \right) + \theta \left( \Sigma_V^{-1/2} M_2 \Sigma_V^{-1/2} \right) \right)^{-1}\\
\preceq (1-\theta) \left( \Sigma_V^{-1/2} M_1 \Sigma_V^{-1/2} \right)^{-1} + \theta \left( \Sigma_V^{-1/2} M_2 \Sigma_V^{-1/2} \right)^{-1},
\end{multline}
where $A \preceq B$ implies that $B - A$ is positive semidefinite. Since $\trace(\cdot)$ is a \emph{linear functional} over the set of $n \times n$ matrices we have
\begin{multline*}
	\trace \Bigl( \Sigma_V \bigl( (1-\theta) M_1 + \theta M_2 \bigr)^{-1} \Bigr) = \trace\Bigl(\Bigl( \Sigma_V^{-1/2} \bigl( (1-\theta) M_1 + \theta M_2 \bigr) \Sigma_V^{-1/2} \Bigr)^{-1}\Bigr) \\
\leq (1-\theta) \trace \Bigl(\Bigl( \Sigma_V^{-1/2} M_1 \Sigma_V^{-1/2} \Bigr)^{-1}\Bigr) + \theta \trace\Bigl(\Bigl( \Sigma_V^{-1/2} M_2 \Sigma_V^{-1/2} \Bigr)^{-1}\Bigr) \\
\leq (1-\theta) \trace (\Sigma_V M_1^{-1}) + \theta\trace (\Sigma_V M_2^{-1}).
\end{multline*}
In other words, the function $ M\mapsto \trace(\Sigma_V M^{-1}) $ is a convex function on the set of symmetric and positive definite matrices. Moreover, we know that for a collection $\{ \dict{i} \}_{i = 1}^K$ that is feasible for \eqref{DL 1},
\[
	\mathcal D_K\ni \{\dict{i}\}_{i=1}^K\mapsto h( \dict{1} , \ldots,  \dict{K} ) \Let \sum_{i = 1}^K  \dict{i}  \dict{i}^\top 
\]
maps into the set of positive definite matrices. Therefore, the objective function in \eqref{DL 1} is a convex function on $ \image(h) $. This allows us to translate the feasible set of the optimization problem \eqref{DL 1} to the set of matrices \(M\) formed by all feasible collections $\{ \dict{i} \}_{i = 1}^K$, i.e., on \(h(\mathcal D_K)\).

Let $ R \Let \set[\big]{M \in \posdef{n} \suchthat \trace(M) = K }$. On the one hand, from Corollary \ref{theorem 2} with $A = I_n$, we know that any symmetric and positive definite matrix $ M \in R $ can be decomposed as 
\[
	M = \sum_{i = 1}^K \dict{i}\dict{i}^\top \quad \text{with } \norm{ \dict{i} } = \sqrt{\frac{\trace(M)}{K}} = 1\text{ for all }i =  1, \ldots, K.
\]
The fact that $M$ is positive definite implies that $\Span\{\dict{i}\}_{i = 1}^K = \mathbb{R}^n$. Therefore, $\{\dict{i}\}_{i = 1}^K \in \mathcal D_K $ and $ M = h(\dict{1}, \ldots, \dict{K})$, which implies that
\begin{equation}
	\label{4}
	R \subset h(S).
\end{equation}
On the other hand, for any collection of vectors $\{\dict{i}\}_{i = 1}^K \in \mathcal D_K$, we have $h(\dict{1}, \ldots, \dict{K}) = \sum_{i = 1}^K \dict{i}\dict{i}^\top \in \posdef{n}$ and $ \trace\bigl( h(\dict{1}, \ldots, \dict{K}) \bigr) = \sum_{i = 1}^K \dict{i}^\top \dict{i} = K $. Therefore, by definition of $ R $,
\begin{equation}
	\label{5}
	h(S) \subset R.
\end{equation}
From \eqref{4} and \eqref{5} we conclude that $ h(\mathcal D_K) = R $. The optimization problem \eqref{DL 1} is, therefore, equivalent to the one where the feasible set is the set of positive definite matrices with trace \(K\), i.e., from \eqref{DL 1},
\begin{equation}
	\label{DL 2}
	\begin{aligned}
		& \minimize_{M \in \posdef{n}}	&& \trace \bigl(\Sigma_VM^{-1}\bigr)\\
		& \sbjto	&& \trace(M) - K = 0.
	\end{aligned}
\end{equation}

The optimization problem in \eqref{DL 2} is convex since its objective function is convex (as a function of $M$) and the feasible region is the intersection of a convex cone $\posdef{n}$ and the affine space $\set[\big]{M \in\R^{n\times n}\suchthat\trace(M) - K = 0}$. In the light of \cite[p.\ 244]{boydconvex} it follows that \eqref{DL 2} can be solved by considering just the first order optimality conditions. These first order optimality conditions are expressed in terms of a Lagrangian
\[
	L(M,\gamma) \Let \trace(M^{-1} \Sigma_V) + \gamma \bigl(\trace(M) - K \bigr),
\]
containing a KKT multiplier $\gamma$ at an optimal point \(M^*\) as
\begin{equation}
\label{opt1 DL 2}
\begin{aligned}
0 = \nabla_M L(M^*,\gamma) &= \nabla_M \Bigl(\trace(M^{-1} \Sigma_V) + \gamma\bigl(\trace(M) - K \bigr)\Bigr) \bigg|_{M = M^*} \\
& = - \bigl( (M^*)^{-1} \Sigma_V (M^*)^{-1} \bigr)^\top + \gamma I_n.
\end{aligned}
\end{equation}
But since $M^*, \Sigma_V \in \posdef{n}$, by symmetry it follows that \( (M^*)^{-1} \Sigma_V (M^*)^{-1} = \gamma I_n\), leading to
\begin{equation}
	\label{converge point}
	\Sigma_V = \gamma (M^*)^2.
\end{equation}
Since $\Sigma_V \neq O_{n \times n}$, we get $\gamma \neq 0$, and write $M^*$ as
\[
	M^* = \frac{1}{\sqrt{\gamma}} \Sigma^{1/2}_V.
\]
To evaluate $\gamma$ we use the fact that by construction \(K =\trace(M^*) = \frac{1}{\sqrt{\gamma}} \trace \bigl( \Sigma^{1/2}_V \bigr)\), which gives 
\[
\gamma = \bigg( \frac{\trace\bigl(\Sigma_V^{1/2}\bigr)}{K} \bigg)^2.
\]
In other words, the final expression of the optimizer $M^*$  in the problem \eqref{DL 2} is
\begin{equation}
	\label{DL solution}
	M^* = \frac{K}{\trace\bigl( \Sigma_V^{1/2} \bigr)}\Sigma_V^{1/2}.
\end{equation}
It follows that the optimal value of the problem \eqref{DL 2} (and therefore of \eqref{DL 1}) is $\dfrac{\bigl(\trace (\Sigma_V^{1/2})\bigr)^2}{K}$. Therefore, this value must be a lower bound of the optimal value, if it exists, for the problem \eqref{DL problem}.

Employing Corollary \ref{theorem 2} with $A = I_n$, we decompose $ M^* $ as 
\begin{equation}
	\label{e:Mstar decomposition}
	M^* = \sum_{i = 1}^K \dict{i}^*{\dict{i}^*}^\top\quad\text{with } \norm{\dict{i}^*} = 1 \text{ for each }i = 1, \ldots,K.
\end{equation}
Let us consider the dictionary $D_K^*$ consisting of the vectors $\{\dict{i}^*\}_{i = 1}^K$ obtained above. Since $X_V = \mathbb{R}^n$, the matrices $\Sigma_V, \Sigma_V^{1/2}$, and $M^*$ are of rank $n$, and therefore, $\Span\{\dict{i}^*\}_{i = 1}^K = \mathbb{R}^n $. Along with the fact that $\norm{\dict{i}^*} = 1$, we see that the dictionary $D_K^*$ of vectors $\{\dict{i}^*\}_{i = 1}^K$ is feasible for the problem \eqref{DL problem}.

Let us define the scheme 
\[
	\R^n\ni v\mapsto f^*_{D_K^*}(v) \Let \pmat{\dict{1}^*}{\dict{2}^*}{\dict{K}^*}^+ v \in\R^K.
\]
It is evident that this scheme $f^*_{D_K^*}$ is feasible for \eqref{DL problem}. But then the objective function in \eqref{DL problem} evaluated at $D_K = D_K^*$ and $f = f_{D_K^*}^*$ must be equal to $\dfrac{\bigl(\trace (\Sigma_V^{1/2})\bigr)^2}{K}$. Since this particular value is also a lower bound for the optimal value of \eqref{DL problem}, the problem \eqref{DL problem} is solvable. An optimal dictionary-scheme pair is given by
\begin{equation}
	\label{DL solution 1}
	\begin{cases}
		D_K^* = \{\dict{i}^*\}_{i = 1}^K \text{ obtained from the decomposition \eqref{e:Mstar decomposition}, and}\\
		\R^n\ni v\mapsto f^*(v) \Let \pmat{\dict{1}^*}{\dict{2}^*}{\dict{K}^*}^+v \in\R^K. 
	\end{cases}
\end{equation}
The proof is now complete.
\end{proof}

We provide the Algorithm \ref{procedure 2} that computes optimal dictionary-scheme pairs for the case $X_V = \mathbb{R}^n$. The inputs to the algorithm are the matrix $\Sigma_V$ and the size \(K\) of a dictionary:

\begin{algorithm}[H]
\caption{$\ell_2$-optimal dictionary for the case $X_V = \mathbb{R}^n$.}
\label{procedure 2}
\KwIn{A matrix $\Sigma_V\in\posdef{n}$ and a number $K \geq n$.}
\KwOut{An $\ell_2$-optimal dictionary-scheme pair $\bigl(\{\dict{i}^*\}_{i = 1}^K, f^*\bigr)$.}
\nl Define $M_1  \Let \frac{K}{\trace\bigl( \Sigma_V^{1/2} \bigr)}\; \Sigma_V^{1/2}$.

\nl Define $ M_2 \Let \begin{pmatrix}
	M_1 & O_{n\times (K-n)} \\
	O_{(K-n)\times n} & I_{K-n}
\end{pmatrix}
$ , 
$
A \Let \begin{pmatrix}
	I_n & O_{n\times(K-n)} \\
	O_{(K-n)\times n} & O_{(K-n)\times(K-n)}
\end{pmatrix}
$

\nl Compute $C \in \mathbb{R}^{K \times K}$ such that $M_2 = CC^\top $.

\nl Define $\Lambda \in \mathbb{R}^{K \times K}$ by $\Lambda \Let C^\top A C$, and apply Algorithm \ref{Algo1} to get a collection of vectors $\{x_i\}_{i = 1}^K \subset \mathbb{R}^K$.

\nl Define the collection $\{v_i\}_{i = 1}^K \subset \mathbb{R}^K$ by $v_i \Let C x_i$ for $i = 1,\ldots, K $.

\nl Define the $\ell_2$-optimal dictionary $\{\dict{i}^*\}_{i = 1}^K \subset \mathbb{R}^n$ such that the $j^{th}$ component of $\dict{i}^*$ is given by $\dict{i}^*(j) \Let v_i(j)$ for $j = 1,\ldots,n$ and for every $ i =  1,\ldots,K$.  

\nl Define the optimal scheme $\R^n\ni v\mapsto f^*(v) \Let  \pmat{\dict{1}^*}{\dict{2}^*}{\dict{K}^*}^+ v$.
\end{algorithm}

%\todo[inline, color=green!40]{Complexity? This will be a scientific matter to report. If you do comment on this aspect, you should add it in one of the remarks.}

%===============================================================================
\subsection{Proof of Lemma \ref{lemma 2}}

\begin{proof}
We argue by contradiction. Suppose that the assertion of the Lemma is false. If we denote by $x_i$ the orthogonal projection of $\dict{i}$ on $X_V$ and by $y_i$ the orthogonal projection of $\dict{i}$ on the orthogonal complement of $X_V$, we must have $\norm{x_i} < 1$ for at least one value of $i$. If $f$ is an optimal scheme of representation, feasibility of $f$ gives, for any $v \in R_V$,
\begin{equation}\label{lemma 2 eq 1}
\begin{aligned}
	v	& = \sum_{i = 1}^K \dict{i} f_i(v) = \biggl(\sum_{i = 1}^K x_i f_i(v)\biggr) + \biggl(\sum_{i = 1}^K y_i f_i(v)\biggr)\\
		 & = \sum_{\substack{i = 1,\\ \norm{x_i} \neq 0}}^K x_i f_i(v) + 0.
\end{aligned}
\end{equation}
Fix a unit vector \(x\in X_V\), and define a dictionary $\{\dict{i}^*\}_{k=1}^K$ by
\begin{align*}
\dict{i}^* \Let 
\begin{dcases}
\frac{x_i}{\norm{x_i}} & \text{if } \norm{x_i} \neq 0,\\
x	& \text{otherwise}.
\end{dcases}
\end{align*}
Then clearly 
\[
	\Span\{\dict{i}^*\}_{i = 1}^K \supset \Span\{x_i\}_{i = 1}^K \supset R_V \quad\text{and}\quad \norm{\dict{i}^*} = 1 \text{ for all }i = 1,\ldots,K.
\]
In other words, the dictionary of vectors $\{\dict{i}^*\}_{i = 1}^K$ is feasible for the problem \eqref{DL problem general}. Let us now define a scheme $f^*$ by
\[
	\R^n\ni v\mapsto f^*(v) \Let \diag\{\norm{x_1},\norm{x_2},\ldots, \norm{x_K}\}f(v)\in\R^K.
\] 
For any $v \in R_V$, using the dictionary consisting of vectors $\{\dict{i}^*\}_{i = 1}^K$ we get
\begin{equation}\label{lemma 2 eq 2}
\begin{aligned}
	\sum_{i = 1}^K \dict{i}^* f^*_i(v) &= \sum_{i = 1}^K \dict{i}^* \norm{x_i}f_i(v) = \sum_{\substack{i = 1,\\\norm{x_i} \neq 0}}^K \frac{x_i}{\norm{x_i}} \norm{x_i}f_i(v) = v,
\end{aligned}
\end{equation}
where the last equality follows from \eqref{lemma 2 eq 1}. Thus, $f^*(\cdot)$ along with the dictionary of vectors $\{\dict{i}^*\}_{i = 1}^K$ is feasible for problem \eqref{DL problem general}. But for any $v \in R_V$ we have
\[
\norm{f^*(v)}^2 = \sum_{i = 1}^K \bigl(f^*_i(v)\bigr)^2 = \sum_{i = 1}^K \norm{x_i}^2 \bigl(f_i(v)\bigr)^2 < \sum_{i = 1}^K \bigl(f_i(v)\bigr)^2 = \norm{f(v)}^2,
\]
where the inequality is due to the fact that $\norm{x_i} < 1$ for at least one $i$. This contradicts the assumption that the pair $\{\dict{i}\}_{i = 1}^K$ along with the scheme $f$ is optimal for \eqref{DL problem general}.
\end{proof}

%===============================================================================
\subsection{Proof of Theorem \ref{DL general theorem}}
\label{sec:proof-of-general-theorem}

\begin{proof}
The problem \eqref{DL problem general 2} is similar to problem \eqref{DL problem} except for the first constraint. In \eqref{DL problem} we optimize over vectors taking values on the surface of the unit sphere, whereas in \eqref{DL problem general 2} we optimize over vectors taking values on the surface of the ellipsoid $\set[\big]{x \in \mathbb{R}^m \suchthat x^\top (B^\top B)x = 1}$. Following the arguments in the proof of Theorem \ref{DL theorem} till \eqref{DL 1}, one can conclude that the optimal value, if it exists, of problem \eqref{DL problem general 2} is bounded below by the optimal value, if it exists, of the problem
\begin{equation}\label{modified genral DL}
\begin{aligned}
	& \minimize_{\{\dictx{i}\}_{i=1}^K}	&& \trace \biggl( \Sigma_{V_X} \biggl( \sum_{i = 1}^K\dictx{i}\dictx{i}^\top \biggr)^{-1}\biggr)\\
	& \sbjto	&&
	\begin{cases}
		\dictx{i}^\top (B^\top B) \dictx{i}  = 1\text{ for all } i = 1,2, \ldots, K, \\
		\Span\{\dictx{i}\}_{i = 1}^K = \mathbb{R}^m,
	\end{cases}
\end{aligned}
\end{equation}
where $\Sigma_{V_X} \Let \EE_\mu \bigl[V_XV^\top_X\bigr] = \bigl((B^\top B)^{-1}B^\top\bigr) \EE_{\mu}\bigl[ VV^\top \bigr] \bigl((B^\top B)^{-1}B^\top \bigr)^\top $.

Let us define:
\begin{itemize}[label=\(\circ\), leftmargin=*]
	\item \(S\) to be the feasible region of the problem \eqref{modified genral DL},
	\item $R \Let \set[\big]{H \in \posdef{m} \suchthat \trace\bigl(H(B^\top B)\bigr)) = K}$, and
	\item the map $\bigl(\mathbb{R}^{m}\bigr)^K \ni (\dictx{1}, \dictx{2}, \ldots, \dictx{K})\mapsto  h(\dictx{1}, \dictx{2}, \ldots, \dictx{K}) \Let \sum_{i = 1}^K \dictx{i}\dictx{i}^\top \in \possemdef{m} $.
\end{itemize}
From Corollary \ref{theorem 2} we see that for every $H \in R$ there exists a collection of vectors $\{\dictx{i}\}_{i = 1}^K$ such that 
\[
	\sum_{i = 1}^K \dictx{i}\dictx{i}^\top = H \quad\text{and}\quad \dictx{i}^\top (B^\top B)\dictx{i} = \frac{\trace\bigl(H(B^\top B)\bigr)}{K} = 1,
\]
which, along with the fact that $\rank(H) = m \Rightarrow \Span\{\dictx{i}\}_{i = 1}^K = \mathbb{R}^m$, imply that 
\begin{equation}\label{Theorem 4 eq 1}
R \subset h(S).
\end{equation}
Moreover, for any collection $\{\dictx{i}\}_{i = 1}^K \in S$, we have
\[
	\trace\bigl( h(\dictx{1}, \dictx{2}, \ldots, \dictx{K})(B^\top B) \bigr) = \sum_{i = 1}^K \dictx{i}^\top (B^\top B) \dictx{i} = K\quad\text{and}\quad h(\dictx{1}, \dictx{2}, \ldots, \dictx{K}) \in \posdef{m},
\]
which implies that
\begin{equation}\label{Theorem 4 eq 2}
h(S) \subset R.
\end{equation} 
From \eqref{Theorem 4 eq 1} and \eqref{Theorem 4 eq 2} we conclude that $R = h(S)$. In other words, instead of optimizing over the feasible collection of vectors in $S$ in \eqref{modified genral DL}, one can equivalently optimize over the set of symmetric positive definite matrices in $R$. This consideration leads us to the problem:
\begin{equation}\label{modified general DL 2}
\begin{aligned}
	& \minimize_{H \; \in \; \posdef{m}}	&& \trace \bigl(\Sigma_{V_X}H^{-1}\big)\\
	& \sbjto	&& \trace\bigl(H(B^\top B)\bigr) - K = 0.
\end{aligned}
\end{equation}

Letting $M \Let (B^\top B)^{1/2}H(B^\top B)^{1/2}$, we write the optimization problem \eqref{modified general DL 2} with $M$ as the variable instead of $H$. Due to this change of variables, the constraint and the objective function become
\[
	\trace\bigl(H(B^\top B)\bigr) = \trace\bigl((B^\top B)^{1/2}H(B^\top B)^{1/2}\bigr) = \trace(M),
\]
and
\begin{equation}\label{Theorem 4 eq 3}
\begin{aligned}
\trace\bigl(\Sigma_{V_X}H^{-1}\bigr) &= \trace\bigl(\Sigma_{V_X}{(B^\top B)^{1/2}M^{-1}(B^\top B)^{1/2}}\bigr) \\
&= \trace\bigl((B^\top B)^{1/2}\Sigma_{V_X}(B^\top B)^{1/2}M^{-1}\bigr)\\
&= \trace\bigl(\Sigma M^{-1}\bigr),
\end{aligned}
\end{equation}
where 
\begin{equation}
\begin{aligned}
\Sigma &\Let (B^\top B)^{1/2}\Sigma_{V_X}(B^\top B)^{1/2} \\
	   &= (B^\top B)^{1/2} \bigl((B^\top B)^{-1}B^\top\bigr) \EE_{\mu}\bigl[ VV^\top \bigr] \bigl((B^\top B)^{-1}B^\top\bigr)^\top (B^\top B)^{1/2}\\
&= (B^\top B)^{-1/2} \bigl(B^\top \Sigma_V B \bigr) (B^\top B)^{-1/2}.
\end{aligned}
\end{equation}
Using \eqref{Theorem 4 eq 3} we write the problem \eqref{modified general DL 2} equivalently as:
\begin{equation}\label{modified general DL 3}
\begin{aligned}
	& \minimize_{M \in \posdef{n}}	&&  \trace\bigl(\Sigma M^{-1}\bigr)\\
	& \sbjto	&& \trace(M) - K = 0.
\end{aligned}
\end{equation}
The problem \eqref{modified general DL 3} is identical to \eqref{DL 2}, which implies that the problem \eqref{modified general DL 3} is solvable, and an optimizer is
\[
	M^* \Let \; \frac{K}{\trace\bigl(\Sigma^{1/2}\bigr)} \Sigma^{1/2}.
\]
Therefore, the problem \eqref{modified general DL 2} is solvable, and an optimizer is 
\begin{equation}
\label{H1}
\begin{aligned}
H^*	& \Let (B^\top B)^{-1/2}M^*(B^\top B)^{-1/2}\\
	& = \frac{K}{\trace\bigl(\Sigma^{1/2}\bigr)}\bigl((B^\top B)^{-1/2}\Sigma^{1/2}(B^\top B)^{-1/2}\bigr).
\end{aligned}
\end{equation}
From Corollary \ref{theorem 2} it follows that there exists a collection $\{\dictx{i}^*\}_{i = 1}^K$ of vectors such that 
\[
	\sum_{i = 1}^K \dictx{i}^* {\dictx{i}^*}^\top = H^* \quad \text{and}\quad {\dictx{i}^*}^\top \bigl(B^\top B\bigr)\dictx{i}^* = \frac{\trace\bigl(H^*(B^\top B)\bigr)}{K} = 1.
\]

Employing arguments similar to those given in the proof of Theorem \ref{DL theorem}, we now conclude that the pair
\begin{itemize}[label=\(\circ\), leftmargin=*]
	\item the collection of vectors $\{\dictx{i}^*\}_{i = 1}^K$, and
	\item the scheme $f_X^*(u) = \pmat{\dictx{1}^*}{\dictx{2}^*}{\dictx{K}^*}^+ u$,
\end{itemize}
is optimal for the problem \eqref{DL problem general 2}. Using the optimal solution of \eqref{DL problem general 2}, we define a dictionary-scheme pair as:
\begin{equation}
\label{DL general solution}
\begin{cases}
\dict{i}^* \Let B \dictx{i}^* \quad \text{for $i = 1,\ldots,K$},\\
\R^n\ni v\mapsto f^*(v) \Let f_X^*\Bigl(\bigl((B^\top B)^{-1}B^\top \bigr)v \Bigr) = \pmat{\dictx{1}^*}{\dictx{2}^*}{\dictx{K}^*}^+ \bigl((B^\top B)^{-1}B^\top \bigr) v.
\end{cases}
\end{equation}
It is clear that the pair in \eqref{DL general solution} is feasible for the problem \eqref{DL problem general}, and that the corresponding objective function evaluates to the optimal value of the problem \eqref{modified genral DL}. Therefore, along with the assertion of Lemma \ref{lemma 2} we can conclude that the problem \eqref{DL problem general} is solvable, and in fact an optimal solution is given by \eqref{DL general solution} with the optimal value of $\dfrac{\left(\trace (\Sigma^{1/2})\right)^2}{K}$. This completes the proof.
\end{proof}

As in the case $X_V = \mathbb{R}^n$, we now provide the Algorithm \ref{algo:general_l2_opt_algo} to obtain an optimal dictionary-scheme pair for the general $\ell_2$-optimal dictionary problem \eqref{DL problem general}. The algorithm takes the matrix $\Sigma_V$ and the size of the dictionary \(K\)  as its inputs. From \(\Sigma_V\) we extract a matrix $B \in \mathbb{R}^{n \times m}$ containing a set of basis vectors for $\image (\Sigma_V)$ in its columns, these vectors form a basis for $X_V$.

\begin{algorithm}
\caption{A procedure to obtain $\ell_2$-optimal dictionary.}
\label{algo:general_l2_opt_algo}
\KwIn{A matrix $\Sigma_V\in \possemdef{n}$ and a number $K \geq m \Let \dim(X_V) = \rank(\Sigma_V)$.}
\KwOut{An $\ell_2$-optimal dictionary-scheme pair $\bigl(\{y_i^*\}_{i = 1}^K, f^*\bigr)$.}

\nl Compute a basis $\{b_i\}_{i = 1}^m$ for $\image(\Sigma_V)$ and define $B \Let \pmat{b_1}{b_2}{b_m}$.

\nl Define $\Sigma \Let (B^\top B)^{-1/2} \bigl(B^\top \Sigma_V B\bigr) (B^\top B)^{-1/2}$.

\nl Compute $H \Let \frac{K}{\trace\bigl(\Sigma^{1/2}\bigr)}\bigl((B^\top B)^{-1/2}\Sigma^{1/2}(B^\top B)^{-1/2}\bigr)$.

\nl Define
$
M \Let \begin{pmatrix}
	H & O_{m\times (K-m)} \\
	O_{(K-m)\times m} & I_{K-m}
\end{pmatrix}
$ , 
$
A \Let \begin{pmatrix}
	B^\top B & O_{m\times(K-m)} \\
	O_{(K-m)\times m} & O_{(K-m)\times(K-m)}
\end{pmatrix}
$

\nl Compute $C \in \mathbb{R}^{K \times K}$ such that $M = CC^\top $.

\nl Define $\Lambda \in \mathbb{R}^{K \times K}$ by $\Lambda \Let C^\top A C$, and apply Algorithm \ref{Algo1} to get a collection of vectors $\{x_i\}_{i = 1}^K \subset \mathbb{R}^K$.

\nl Define the collection $\{v_i\}_{i = 1}^K \subset \mathbb{R}^K$ as $v_i \Let C x_i$ for $i = 1,\ldots, K $.

\nl  Define the collection $\{\dictx{i}^*\}_{i = 1}^K \subset \mathbb{R}^m$ such that the $j^{th}$ component of $\dictx{i}^*$ is given by $\dictx{i}^*(j) \Let v_i(j)$ for $j = 1,\ldots,m$ and for every $ i =  1,\ldots,K$.  

\nl Define the $\ell_2$-optimal dictionary $\{\dict{i}^*\}_{i = 1}^K \subset \mathbb{R}^n$ as $\dict{i}^* \Let B \dictx{i}^*$ for $ i = 1, \ldots,K $.

\nl Define the optimal scheme $\mathbb{R}^n \ni v\mapsto f^*(v) \Let \pmat{\dict{1}^*}{\dict{2}^*}{\dict{K}^*}^+ v\in\R^K$.

\end{algorithm}

%%%%%%%%%%%%%%%%%%%%%%%%%%%%%%%%%%%%%%%%%%%%%%%%%%%%%%%%%%%%%%%%%%%%%%%%%%%%%%%%
\section{Conclusion and future directions}
\label{s:conclusion}
In this article we have provided an explicit solution of the $\ell_2$-optimal dictionary problem in the form of a \emph{rank-\(1\) decomposition} of a specific positive definite matrix derived from given data, together with algorithms to compute the corresponding $\ell_2$-optimal dictionaries. 

The analysis in this article assumes that the second moment of the random vector whose samples are to be represented is known. An online algorithm which estimates the second moment of the random vector and computes the dictionary vectors in parallel is being developed, and will be reported in subsequent articles. 

\subsection*{Acknowledgements}
We sincerely thank Prof.\ V.\ S.\ Borkar for his valuable suggestions towards convexity of the $\ell_2$-optimal dictionary problem, Prof.\ K.\ S.\ Mallikarjuna Rao for pointing us to the literature on rank-\(1\) decompositions of matrices, and Prof.\ N.\ Khaneja for helpful discussions.

\vskip 0.2in
\bibliographystyle{alpha}
\bibliography{ref}

\end{document}